\newcommand*\rot{\rotatebox{90}}
\newcommand*\halfrot{\rotatebox{45}}
\newcommand{\cmark}{\ding{51}}%
\newcommand{\xmark}{\ding{55}}
\definecolor{DarkCyan}{HTML}{7FACBE}
\definecolor{Gray}{rgb}{0.939, 0.939, 0.939}
\theoremstyle{plain}
\newtheorem{theorem}{Theorem}[section]
\newtheorem{proposition}[theorem]{Proposition}
\theoremstyle{definition}
\theoremstyle{remark}
\newtheorem{remark}[theorem]{Remark}
\newtheorem{example}{Example}
\definecolor{dg}{RGB}{0,120,0}
\def\eqref#1{equation~\ref{#1}}
\def\1{\bm{1}}
\DeclareMathAlphabet{\mathsfit}{\encodingdefault}{\sfdefault}{m}{sl}
\SetMathAlphabet{\mathsfit}{bold}{\encodingdefault}{\sfdefault}{bx}{n}
\newcommand{\E}{\mathbb{E}}
\DeclareMathOperator*{\argmax}{arg\,max}
\DeclareMathOperator*{\argmin}{arg\,min}
\newcommand{\parname}{Heterogeneous Partitioning\xspace}
\title{Benchmarking Algorithms for Federated Domain Generalization}
\author{Ruqi Bai, \,\,Saurabh Bagchi \,\,\&\,\, David I. Inouye \\
Elmore Family School of Electrical and Computer Engineering\\
Purdue University\\
West Lafayette, IN 47907, USA \\
\texttt{\{bai116,sbagchi,dinouye\}@purdue.edu} \\
}
\begin{document}
\maketitle

\begin{abstract}
While prior federated learning (FL) methods mainly consider client heterogeneity, we focus on the \emph{Federated Domain Generalization (DG)} task, which introduces train-test heterogeneity in the FL context.
Existing evaluations in this field are limited in terms of the scale of the clients and dataset diversity.
Thus, we propose a Federated DG benchmark that aim to test the limits of current methods with high client heterogeneity, large numbers of clients, and diverse datasets. 
Towards this objective, we introduce a novel data partition method that allows us to distribute any domain dataset among few or many clients while controlling client heterogeneity. We then introduce and apply our methodology to evaluate $14$ DG methods, which include centralized DG methods adapted to the FL context, FL methods that handle client heterogeneity, and methods designed specifically for Federated DG on $7$ datasets.
Our results suggest that, despite some progress, significant performance gaps remain in Federated DG, especially when evaluating with a large number of clients, high client heterogeneity, or more realistic datasets.  
Furthermore, our extendable benchmark code will be publicly released to aid in benchmarking future Federated DG approaches.
\end{abstract}

\section{Introduction}

\emph{Domain generalization} (DG) {\citep{feNIPS2011_b571ecea}} formalizes a special case of \emph{train-test heterogeneity} in which the training algorithm has access to data from multiple source domains but the ultimate goal is to perform well on test data coming from a different distribution from training distribution---i.e., a type of out-of-distribution generalization instead of the standard in-distribution generalization.
While most prior DG work focuses on centralized algorithms, another natural context is federated learning (FL) \citep{konevcny2016federated}, which is a distributed machine learning context that assumes each client or device owns a local dataset.
These local datasets could exhibit heterogeneity, which we call \emph{client heterogeneity} (e.g., class imbalance between clients).
Although train-test heterogeneity (in DG) and client heterogeneity (in FL) are independent concepts, both could be naturally defined in terms of domain datasets.
For example, suppose a network of hospitals aimed to use FL to train a model to predict a disease from medical images.
Because the equipment is different across hospitals, it is natural to assume that each hospital contains data from different domains or environments (or possibly a mixture of domains if it is a large hospital)---this is a case of domain-based client heterogeneity.
Yet, the trained model should be robust to changes in equipment within a hospital or to deployment in a new hospital that joins the network---both are cases of domain-based train-test heterogeneity.
The interaction between these types of heterogeneity produces new algorithmic and theoretic challenges yet it may also produce new insights and capabilities. 
Solutions to Federated DG with both types of heterogeneity could increase the robustness and usefulness of FL approaches because the assumptions more naturally align with real-world scenarios rather than assuming the datasets are i.i.d.
This could enable training on partial datasets, increase robustness of models to benign spatial and temporal shifts, and reduce the need for retraining.

In the centralized regime, various approaches have been proposed for DG, including feature selection, feature augmentation, etc. Most of these methods are not applicable in the FL regime which poses unique challenges. In the FL regime, client heterogeneity has long been considered a statistical challenge since FedAvg \citep{fedavg}, where it experimentally shows that FedAvg effectively mitigate some client heterogeneity. There are many other extensions based on the FedAvg framework tackling the heterogeneity among clients in FL, for example using variance reduction method \citep{karimireddy2020scaffold}. An alternative setup in FL, known as the personalized setting, aims to learn personalized models for different clients to tackle heterogeneity, for example \citet{hanzely2020lower}. There are also unsupervised FL methods tackling domain translation instead of classification \citep{zhou2022efficient,wang2023fedmedgan}.  However, none of these works consider model robustness under domain shift between training and testing data. Recently, a few works in the FL regime tackling DG \citep{feddg,fedadg,nguyenfedsr,tenison2022gradient} have been proposed, however their evaluations are limited in the following senses: \textbf{1)} The evaluation datasets are limited in the number and diversity of domains. \textbf{2)} The evaluations are restricted to the case when the number of clients is equal to the number of domains, which may be an unrealistic assumption (e.g., a hospital that has multiple imaging centers or a device that is used in multiple locations). The case when clients number might be massive are of both theoretical and application interests.
\textbf{3)} None of the works consider the influence of the effect of the number of communication rounds. We provide an overview of the tasks in \autoref{tab:setting}, considering both the heterogeneity between training and testing datasets (standard vs. domain generalization) and among clients (domain client heterogeneity). While some studies have addressed the standard supervised learning task, there is a need for a fair evaluation to understand the behavior of domain generalization algorithms in the FL context under those new challenges.

There are several benchmark datasets available for evaluating domain generalization (DG) methods in the centralized setting. These benchmarks, such as DomainBed \citep{domainbed} and WILDS \citep{wilds}, provide multiple datasets that are suitable for assessing the performance of DG algorithms. However, they did not explicitly consider the unique challenges that arise in the federated learning (FL) setting. On the other hand, there are also benchmarks specifically designed for FL. For instance, the LEAF \citep{leaf} and FLamby \citep{flamby} benchmark provides a standardized framework for evaluating FL algorithms. They include several datasets from various domains and allows researchers to assess the performance of their algorithms in a realistic FL scenario. Another benchmark for FL is PFLBench \citep{pflbench}, which focuses on evaluating personalized FL methods. PFLBench provides 12 datasets containing various applications. Though these FL-based benchmarks consider statistical heterogeneity, they fail to consider the DG task adequately. Moreover, the level of statistical heterogeneity present in these datasets is insufficient for proper DG evaluation. 
In summary, DG benchmarks do not consider FL challenges, and FL benchmarks do not consider DG challenges.

\textbf{Major contributions:} 
We develop a benchmark methodology for evaluating Federated DG with various client heterogeneity contexts and diverse datasets, and we evaluate representative Federated DG approaches with this methodology.
\textbf{1)} We develop a novel method to partition dataset across any number of clients that is able to control the heterogeneity among clients. (see \autoref{sec:new_partition}).  
\textbf{2)} We propose the first Federated DG benchmark methodology including four important dimensions of the experimental setting (see \autoref{sec:main}). 
\textbf{3)} We compare three broad approaches to Federated DG: centralized DG methods na\"ively adapted to FL setting, FL methods developed for client heterogeneity (e.g., class imbalance), and recent methods specifically designed for Federated DG on $7$ diverse datasets.
Our results indicate that there still exist significant gaps and open research directions in Federated DG.
\textbf{4)} We release an extendable open-source library for evaluating Federated DG methods.

\textbf{Notation:}
Let $[A] := \{1,2,\cdots, A\}$ denote the set of integers from 1 to $A$.
Let $d \in [D]$ denote the $d$-th domain out of $D$ total training domains and let $c \in [C]$ denote the $c$-th client out of $C$ total clients.
Let $\mathcal{D} \subseteq [D]$ denote a subset of domain indices.
Let $\mathcal{L}(\theta; p)$ denote a generic objective with parameters $\theta$ given a distribution $p$, which is approximated via samples from $p$.
Let $p_d$ and $p_c$ denote the distribution of the $d$-th domain and the $c$-th client, respectively.
Let $\mathcal{S}$ denote a set of samples.

\newcommand{\Dtrain}{\mathcal{D}_\text{train}}
\newcommand{\Dtest}{\mathcal{D}_\text{test}}
\newcommand{\Loss}{\mathcal{L}}
\section{Background: Federated Domain Generalization Methods}
In this section, we briefly introduce the problem backup and setup. Furthermore, evaluation on FL regimes often requires data partition methods to distribute training data to each client. Thus we also introduce the existing partition methods in this section. 

\subsection{Problem Background and Setup}

\noindent{\bf Domain generalization:}
Unlike standard ML which assumes the training and test data are independent and identically distributed (i.i.d.), the ultimate goal of DG is to minimize the average or worst-case loss of test domain distributions using only samples from the training domain distributions.
Formally, given a set of training domain distributions $\{ p_d : d \in \Dtrain\}$, minimize the average or worst case loss over test domain distributions, i.e.,
\begin{align}
     \min_\theta {\textstyle \frac{1}{|\Dtest|}\sum_{d \in \Dtest}} \Loss(\theta; p_d) 
    \quad \text{or} \quad 
    \min_\theta \max_{d \in \Dtest} \Loss(\theta; p_d) \,,
\end{align}
where $\Loss(\theta; p_d) = \E_{(x,y) \sim p_d}[\ell(x,y; \theta)]$ where $\ell$ is a per-sample loss function such as squared or cross-entropy loss.
Those two objectives are the ultimate goal for domain generalization. In the real-world setting, we never have access to $\Dtest.$ There are works establishing objectives to approximate those two objectives. For example,
Fish \citep{fish} is constructed using average training loss with a penalty, and IRM \citep{irm}, GroupDRO \citep{groupdro} are inspired from the worst-case loss over a set of
data distributions constructed from the training domains.

There are mainly two kinds of DG test scenarios. The first scenario is the general DG where the test dataset form a new domain. Another scenario is called subpopulation shift where the test set is a subpopulation of the training distributions with the goal of better performance on the worst-case domain\footnote{In Wilds \citep{wilds}, they treat subpopulation shift as another kind of train-test heterogeneity.} DG is challenging where it breaks the i.i.d. assumptions in traditional machine learning.  

\noindent{\bf Federated DG:} \emph{Federated} DG represents an intuitive progression from centralized DG. Distributed learning is pivotal in numerous real-world applications, and its inherent architecture often results in multi-domain data. The quest for improved generalization is instinctive in this context. However, \emph{Federated} DG introduces add two layers of complexity. First, the heterogeneous client distribution are harder to converge even without considering the train-test heterogeneity \citep{zhao2018federated, fedavg, karimireddy2020scaffold, yu2019parallel, basu2019qsparse, wang2019adaptive, li2019convergence}. Second, privacy considerations and communication constraints typically prohibit the direct comparison of data across domains, thereby limiting the applicability of many conventional centralized DG methods.

Formally, the FL problem can be abstracted as follows:
\begin{align*}
    \forall c \in [C], \,\, \underbrace{\theta_c = \texttt{Opt}(\Loss(\theta; p_c), \theta_{\text{init}} = \theta_{\text{global}} )}_{\text{Locally optimize given local distribution $p_c$}} 
    \;\text{and} \;
    \underbrace{ \theta_{\text{global}} = \texttt{Agg}(\theta_1, \theta_2, \cdots, \theta_C)}_{\text{Aggregate client model parameters on server}} \,,
\end{align*}
where the client distributions may be homogeneous (i.e., $\forall\, (c,c'), p_c = p_{c'}$) or heterogeneous (i.e., $\exists\, c\neq c', p_c\neq p_{c'}$), $\texttt{Opt}$ minimizes an objective $\Loss(\theta; p_c)$ initialized at $\theta_{\text{init}}$. The most common objective in Federated learning is empirical risk minimization (ERM), which minimizing the average loss over the given dataset. $\texttt{Agg}$ aggregates the client model parameters, where the most common aggregator is simply a (weighted) average of the client parameters, which corresponds to FedAvg \citep{fedavg}.

\noindent{\bf Domain-based client heterogeneity:}
While previous client heterogeneity (i.e., $\exists\, c\neq c', p_c\neq p_{c'}$) is often expressed as label imbalance, i.e., $p_c(y) \neq p_{c'}(y)$, we make a \emph{domain-based client heterogeneity} assumption that each client distribution is a (different) mixture of train domain distributions, i.e., $p_c(x,y) = \sum_{d \in \Dtrain} w_{c,d}\,\, p_d(x,y)$ where $w_{c,d}$ is the weight of the $d$-th domain for the $c$-th client.
At one extreme, FL with i.i.d. data would be equivalent to the mixture proportions being the same across all clients, i.e., $\forall c, c', d, w_{c,d} = w_{c',d}$, which we call the \emph{homogeneous} setting.
On the other extreme where the heterogeneity is maximum given the client number, we call \emph{complete heterogeneity} (in \autoref{sec:new_partition}.
\autoref{tab:setting} summarizes the train-test heterogeneity in DG and the domain-based client heterogeneity from the FL context, where we focus on Federated DG.

\begin{table}[!htb]
\centering
\caption{Domain-based Federated DG considers the \emph{domain heterogeneity} both among the clients' local datasets (rows) and between the training and test datasets (columns). This paper focuses on the DG setting (right column).}
\label{tab:setting}
\resizebox{\columnwidth}{!}{%
\begin{tabular}{@{}llll@{}}
\toprule
 &  & \multicolumn{2}{c}{Between train and test datasets (train-test heterogeneity)} \\ \cmidrule(l){3-4}
 &  & Standard supervised learning & \textbf{Domain generalization (our focus)} \\ \cmidrule(l){3-3} \cmidrule(l){4-4} 
\multirow{3}{*}{\rot{\begin{tabular}[c]{@{}l@{}}Among\\ clients\end{tabular}}} & Homogeneous ($\lambda=1$) & FL with homogeneous clients & {Federated DG with homogeneous clients} \\
 & Heterogeneous ($0<\lambda<1$) & FL with heterogeneous clients & {Federated DG with heterogeneous clients} \\
 & Complete Heterogeneity ($\lambda=0$) & FL with complete heterogeneity & {Federated DG with complete heterogeneity} \\ \bottomrule
\end{tabular}}
\end{table}

\subsection{Current Data partition methods}\label{sec:old_partition}

In the FL context, the method for partitioning training data is crucial. As of now, there are three primary techniques for dataset partitioning. These methods are all fall in short in some properties we need. See \autoref{sec:new_partition} for a comprehensive analysis. In this section, we primarily present these methods as a background. Shards partitioning \citep{fedavg} is a deterministic partition method. The datasets are initially ordered according to their labels. Then the datasets are partitioned into $2\times C$ shards, and each client $c\in [C]$
is allocated $2$ shards. Dirichlet Partitioning \citep{yurochkin2019bayesian, hsu2019measuring} is a stochastic partition method where each client hold the same number of samples but the proportion among labels are stochastic following a probability vector sampled from the Dirichlet distribution $\text{Dir}(\alpha p)$, where $p$ represents the prior label distribution on the whole training dataset, and $\alpha$ control the heterogeneity level. 
As $\alpha\rightarrow 0$, each client predominantly holds samples of a single label. Conversely, as $\alpha\rightarrow\infty$, the distribution of labels for each client becomes identical.
Semantic Partitioning \citep{yuan2022what} tries to create client heterogeneity on the features of data samples. Firstly, the data is processed through a pretrained model. The outputs from this model are then used to fit a Gaussian Mixture Model that features $K$ clusters for each class $Y$. These $KY$ clusters are then merged iteratively using an  optimal bipartite for random label pairs.

\section{\parname Method}\label{sec:new_partition}
In this section, to thoroughly assess current Federated DG methods, we formulate an optimization problem that an ideal partition method should aim to solve. We then demonstrate that previous partition methods (\autoref{sec:old_partition}) are not feasible solutions. Subsequently, we introduce a new partition method, termed \parname, which is an approximate optimal solution. Generally, our partition method effectively handles partitioning $D$ types of integer-numbered objects into $C$ groups. It's broadly applicable, suitable for domain adaptation, ensuring fairness in Federated Learning (FL), and managing non-iid FL regimes.

\begin{wraptable}{R}{0.3\textwidth}
\vspace{-1.3em}
\begin{minipage}{0.3\textwidth}
\centering
\caption{Comparison of different partition methods.}\label{tab:summary_partition}
\begin{tabular}{@{}lll}
\toprule
                & $C_1$& $C_2$\\ \midrule
Shards& No    & Yes      \\
Dirichlet & Yes     & No\\
Semantic  & No & Yes\\
Ours     & Yes      & Yes\\ \bottomrule
\end{tabular}
\vspace{-2em}
\end{minipage}
\end{wraptable}
Assume we have a set of all samples $\mathcal{S}=\{(x_i, d_i)\}_{i=1}^n$ and let $\mathcal{S}_c$ denote the set of samples assigned to the $c$-th client.
For each client $c\in[C],$ define its domains as the set of domains from which $c$ has at least one sample., i.e., $\mathcal{D}_c = \{d \in [D]: \exists (x_i, d_i) \in \mathcal{S}_c, d_i=d\}$. Denote $\mathcal{P}$ as a partition method.

For Federated DG, it is common that clients have datasets collected from disjoint domains \cite{wilds}. Therefore, it's crucial that $\mathcal{P}$ can generate the strongest possible heterogeneity given the datasets and clients to reflect those common scenarios. 
This requirement can be encoded as either a single domain per client or no intersection of client domains, depending on the numbers of clients and domains (see  constraint $C_1$).
  Furthermore,  in distributed setting,
  to ensure a reasonable comparison with respect to domain generalization in the centralized setting, the total datasets received over the distributed system should match that of the centralized setting; and due to the privacy protocol, clients should not share the datasets.
  Thus, we need to preclude the partitions $\mathcal{P}$ where  some data from $\mathcal{S}$ is not used by any client or clients either have duplicated data, which translate to constraint $C_2$. Thus,  the feasible region of partition $\mathcal{P}$ is defined as $\mathcal{P}\in C_1\cap C_2.$

 Complete Heterogeneity (constraint $C_1$): 
   $\mathcal{P}$ can generate the possibly strongest domain heterogeneity
   given the datasets and clients, i.e., $\mathcal{P}\in C_1$ where $C_1$ is defined as 
    \begin{align}
        C_1\triangleq \left\{\mathcal{P}\,\bigg|\, \text{ if $C > D,$ then $|\mathcal{D}_c|=1, \forall c\,;$ if $C \leq D,$ then $|\mathcal{D}_c \cap \mathcal{D}_{c'}|=0, \forall c\neq c'.$}\right\}
    \end{align}
   True Partition (constraint $C_2$):  $\mathcal{P}$ must produce true set partitions $\{\mathcal{S}_c\}, c\in[C]$. A true set partition is a non-empty, exhaustive, and pairwise disjoint family of sets, i.e., $\mathcal{P}\in C_2$ where $C_2$ is defined as 
    \begin{align}
        C_2\triangleq \left\{\mathcal{P}\,\bigg|\, \text{ $\mathcal{S}_c \neq \emptyset, \forall c$, $\bigcup_{c=1}^C \mathcal{S}_c = \mathcal{S}$, and $\mathcal{S}_c \cap \mathcal{S}_{c'} = \emptyset, \forall c \neq c'.$}\right\}.
    \end{align}
    
The primary goal of this benchmark is to investigate the impact of client domain heterogeneity on the performance of various Federated DG methods. When considering client heterogeneity in a federated setting, a trivial example would be one client holding  $99\%$ of the total data. This scenario essentially mirrors centralized DG, which is already well-understood. However, the genuine uncertainty arises when there's heightened client heterogeneity introduced into domain generalization, especially when data sizes across different clients are nearly equal. In such a situation, the influence of client heterogeneity is at its peak. Our aim is to reduce the sample size imbalance between clients, which we encode as the empirical variance among them. 
Given this, we define  an ideal partition$ \widehat{\mathcal{P}}$ as the solution for the following problem:
 \begin{equation}\label{eqn:opt_part}
    \widehat{\mathcal{P}}\in\argmin\limits_{\mathcal{P} \in C_1\cap C_2 }\frac{1}{C-1}\sum_{c=1}^C \|n_c - \bar{n}\|_2^2,
\end{equation} 
where  $n_c = |\mathcal{S}_c|,$ $\bar{n} = \frac{1}{C}\sum_c n_c$.
Notice that in many scenarios,
it is not always able to achieve $0$ loss, because $n_c\equiv \bar{n}$ may not be feasible for constraints $1$ and $2.$ For example, consider the case of complete heterogeneity with $C=D$ but imbalance of domains, it is impossible to achieve balance $n_c\equiv \bar{n}$ and complete hetereogeneity $|\mathcal{D}_c|=1.$  Most previous partitions forces $n_c\equiv \bar{n}$ thus violating the constraints. 
In particular, Dirichlet Partitioning violates $C_2.$ Shard and semantic violate $C_1.$ We compare different partition methods in \autoref{tab:summary_partition}. See \autoref{Other Methods}
for a detailed discussion on them. 

We construct \parname $\{\mathcal{P}_\lambda\}, \lambda\in[0,1],$  to solve \autoref{eqn:opt_part}. It is parameterized by $\lambda,$ where it
can generate Complete Hetereogeneity ($\lambda=0$)
and always satisfy True Partition (for all $\lambda\in[0,1]$) by construction; further it aims to achieve the best possible balance given these constraints. \parname allows samples allocation from multiple domains across an arbitrary number of clients $C$ while controlling the amount of domain-based client heterogeneity via parameter $\lambda$, ranging from homogeneity to complete heterogeneity. In addition, when $\lambda=0,$ \parname 
is optimal for \autoref{eqn:opt_part} when there's more clients than domains $C>D,$ Further, when $C\leq D,$ \autoref{eqn:opt_part} is NP-hard problem proposed by \citet{graham1969bounds} and \parname is a linear time greedy approximation. The proof is deferred to  \autoref{sec:partition_detail}.  \parname contains the following two steps, see Algorithm~\ref{alg:domain-partition} for pseudo code.

{\bf Step 1: Constructing complete heterogeneity.} 
 We explicitly construct $\mathcal{P}_0$ to satisfy $C_1$ while greedily trying to balance the variance. The key idea to create the complete heterogeneity is to partition one domain among the fewest possible clients.
 
We adopt the following approaches depending on the size of the network $C$ and total domains $D$. 

\emph{Case I}: If $C \leq D$, the domains are sorted in a descending order according to number of sample size in each domain, and are iteratively assigned to the client $c^*$ which currently has the smallest number of training samples, that is
    \begin{equation}
        c^*\triangleq \argmin\limits_{c\in[C]} \sum_{d'\in \mathcal{D}_{c}} n_{d'},
    \end{equation}
    where $n_d$ denotes the total sample size of domain $d.$
In this case, $\mathcal{P}_0$ ensures that no client shares domains with the others, i.e., $|D_c \cap D_c'| = 0,$ but attempts to balance the number of training samples between clients.

\emph{Case II}: If $C>D$, we first assign all the domains one by one to the first $D$ clients; 
then, starting from client $c=D+1$, we divide the currently largest average domain $d^*$ between the new client and the original clients associated with it. That is, for all $c\in \{D+1,D+2,\dots, C\},$ we assign $d^*$ to $c,$ and reassign the same sample sizes for those $c'\in \{1,\dots, c\}$ which share the domain $d^*$  as
\begin{equation}\label{eqn:C>D}
    d^* \in \argmax\limits_{d\in[D]} \frac{n_{d}}{{\textstyle \sum_{c'=1}^{C}}\mathbbm{1}[d\in \mathcal{D}_{c'}]},\quad  \frac{n_{d^*}}{{\textstyle \sum_{c'=1}^{C}}\mathbbm{1}[d^*\in \mathcal{D}_{c'}]},
\end{equation}
where rounding to integers is carefully handled when not perfectly divisible.
Notice that in this case, some clients may share one domain, but no client holds two domains simultaneously, that is 
$|D_c| = 1,$ for $c\in[C].$
In this case, $\mathcal{P}_0$ attempts to balance the number of samples across clients as much as possible while partitioning one domain among the fewest
possible clients.

{\bf Step 2: Computing domain sample counts for each domain with interpolation.} We define $\mathcal{P}_\lambda$ through
the sample counts for each client $c\in[C]$ per domain $d\in[D]$, denoted as $n_{d,c}(\lambda)$ based on the balancing parameter $\lambda \in [0,1]:$ 
\begin{equation}
  \mathcal{P}_\lambda:\quad    n_{d,c}(\lambda) = 
    \underbrace{\lambda \frac{n_d}{C}}_{\text{homogeneous}} + \underbrace{(1-\lambda)  \frac{\mathbbm{1}[d \in \mathcal{D}_c]}{\sum_{c'=1}^C \mathbbm{1}[d \in \mathcal{D}_{c'}]} n_d}_{\text{complete heterogeneous}} \,,
\end{equation}
where $\mathcal{D}_c, c\in[C]$ are constructed by $\mathcal{P}_0$
Step $1.$ This is simply a convex combination between homogeneous clients ($\lambda=1$) and extreme heterogeneous  ($\lambda=0$). A smaller value of $\lambda$ corresponds to a more heterogeneous case.
Given the number of samples per client $i\in[C]$ per domain $d\in[D]$, we simply sample $n_{d,c}$  without replacement from the corresponding domain datasets and build up the client datasets. 
Step $2$ ensures our partition $\mathcal{P}_\lambda$ satisfy
the constraint $C_2,$ because all samples are selected (as proven by $\sum_c n_{d,c} = n_d)$ and there is no overlap (sample without replacement), and no client has zero samples given $|D_c|>0$ in the $\lambda=0,1$ cases.

\section{Benchmark Methodology and Evaluations} \label{sec:main}
In this section, we aim to conduct a comprehensive evaluation of the Federated DG task by considering four distinct dimensions of the problem setup equipped by \parname.  The four dimensions are {\bf 1)} dataset type; {\bf 2)} data difficulty; {\bf 3)} domain-based client heterogeneity; and {\bf 4)} the number of clients. 
This section is organized as follows: we introduce our benchmark datasets \autoref{sec:dataset} covering the first two dimensions, next we introduce $14$ methods we included in our evaluation from three different approaches \autoref{Benchmark Methods}, then we introduce our benchmark setting and the evaluation results on all selected methods in \autoref{sec:main_result} brought the domain-based client heterogeneity and number of clients into consideration.

\subsection{Dataset Type and Dataset Difficulty Metrics} \label{sec:dataset}

While most Federated DG work focuses on standard image-based datasets, we evaluate methods over a diverse selection of datasets. These datasets encompass multi-domain datasets from simpler, pseudo-realistic ones to the considerably more challenging realistic ones. Specifically, our study includes five image datasets and two text datasets. Additionally, within these $7$ datasets, we include one subpopulation shift within image datasets (CelebA) and another within text datasets (Civilcomments). Furthermore, our dataset selections span a diverse range of subjects including general objects, wild camera traps, medical images, human faces, online comments, and programming codes. We also introduce dataset difficulty metrics to measure the empirical challenges of each dataset in the Federated DG task. 

\noindent{\bf Dataset Difficulty Metrics.} To ensure a comprehensive evaluation of current methods across different difficulty levels, we have curated a range of datasets with varying complexities.
We define two dataset metrics to measure the {dataset} difficulty with respect to the DG task and with respect to the FL context using the baseline objective ERM. For DG difficulty, we compute $R_\text{DG},$ the ratio of the ERM performance without and with samples from the test domain (i.e., the later is able to ``cheat'' by seeing part of test domain samples during training).
For FL difficulty, we attempt to isolate the FL effect by computing $R_\text{FL}(\lambda),$ the ratio of ERM-based FedAvg $\lambda$ client heterogeneity over centralized ERM on \emph{in-domain} test samples.
These dataset difficulty metrics can be formalized as follows, for all $\lambda\in[0,1]$
\begin{align*}
    &R_\text{DG} \triangleq \frac{\texttt{ERM\_Perf}(\mathcal{S}_\text{DG-train}, \mathcal{S}'_\text{DG-test})}{\texttt{ERM\_Perf}(\mathcal{S}_\text{DG-train} \cup {\mathcal{S}''_\text{DG-test}}, \mathcal{S}'_\text{DG-test})},R_\text{FL}(\lambda) \triangleq \frac{\texttt{FedAvg\_Perf}(\mathcal{S}_\text{DG-train}, \mathcal{S}_\text{IN-test}; \lambda)}{\texttt{ERM\_Perf}(\mathcal{S}_\text{DG-train}, \mathcal{S}_\text{IN-test})},
\end{align*}
where $\texttt{ERM\_Perf}$ is the performance of ERM using the first argument as training and the second for test, $\texttt{FedAvg\_Perf}$ is similar but with the client heterogeneity parameter $\lambda$, $\mathcal{S}_\text{DG-train}$ denotes samples from the training domains $\Dtrain,$ $\mathcal{S}_\text{DG-test}$ denotes samples from the test domains $\Dtest,$ and $\mathcal{S}'_\text{DG-test}$ and $\mathcal{S}''_\text{DG-test}$ are $20\%,80\% $ partition respectively of $\mathcal{S}_\text{DG-test}$. 
For $R_{\text{FL}}(\lambda)$, we use $\mathcal{S}_\text{IN-test}$ (test samples from the training domains) instead of $\mathcal{S}_\text{DG-test}$ to isolate the FL effect from the DG effect. We apply these metrics in our $7$ selected datasets and include the summary table in \autoref{tab:summary}. Smaller $R_\text{DG}$ and $R_\text{FL}(\lambda)$ indicate a more challenging dataset. For instance, FEMNIST has $R_\text{DG}=1$ indicates the lack of domain heterogeneity. To contrast, IwildCam and CivilComments have small $R_\text{FL}$ showing the challenges from the FL side. 

\subsection{Benchmark Methods}\label{Benchmark Methods}

In this benchmark study, we explore three categories of Federated DG methods. The first category is centralized DG methods adapted into FL regimes. The second category is FL methods tackling client heterogeneity and the third category is Federated DG methods. Please see \autoref{sec:approaches} for a detailed discussion on current available methods to solve Federated DG. To provide a comprehensive evaluation, we assess the performance of several representative methods from each of these categories. specifically, we choose IRM \citep{irm}, Fish \citep{fish}, Mixup \citep{mixup}, MMD \citep{mmd}, Coral \citep{coral}, GroupDRO \citep{groupdro} and adapted them into FL regime by applying them in every local clients and keeping the aggregation identical to FedAvg, i.e., weighted averaging aggregation. All those methods collapse to FedAdam if locally there is only one domain available. We consider $4$ methods. FedDG \citep{feddg}, FedADG \citep{fedadg}, FedSR \citep{nguyenfedsr} and FedGMA \citep{tenison2022}, which are naturally designed to solve Federated DG. We also consider FedProx \citep{li2020federated} and Scaffold \citep{karimireddy2020scaffold} from the Federated methods tackling client heterogeneity approach. All those methods will be compared to two baselines. ERM objective with Adam optimizer \citep{adam} and its FL counterpart FedAdam \citep{reddi2020adaptive}, which is a variant of FedAvg \citep{fedavg}. 

\subsection{Main Results}\label{sec:main_result}

In this section, we present the performance results of $14$ representative methods, derived from distinct research areas, on $7$ diverse datasets. For each dataset, we fix the total computation and communication rounds for different methods for a fair comparison. 

\noindent{\bf Client number}
Equipped with our \parname, we can explore various levels of client heterogeneity and relax the assumption that $C=D$ so that we can leverage both pseudo-realistic and real-world datasets and evaluate methods on larger scale of FL clients. In particular, we set the number of clients to $100.$

\noindent{\bf Validation domain}
In DG task, we cannot access the test domain data. However, we are particularly concerned about the model performance outside the training domains, thus we preserve a small portion of the domains we can access as held-out validation domains, and the held-out validation domains are used for model selection and early stopping. 
Please see \autoref{sec:exp_setting} for more detail.

After training, we choose the model according to the early-stopping at the communication round which achieves the best held-out-domain performance, and finally we evaluate the performance on the test-domain in \autoref{tab:img_main}, \autoref{tab:img_main2} and \autoref{tab:text_main}. See \autoref{sec:hyperparameter} for detailed hyperparameters choices. We make the following remarks on the main results from \autoref{tab:img_main}, \autoref{tab:img_main2} and \autoref{tab:text_main}. Because FEMNIST has low DG difficulty, we defer the results on FEMNIST in the Appendix (\autoref{tab:femnist}).

\begin{remark}
{\bf FedAvg with an ERM objective is a strong baseline, especially on the general DG datasets.}
We observe FedAvg-ERM outperforms other methods multiple times. It still servers a strong baseline that is challenging to beat across general DG datasets, similar to the centralized case stated in DomainBed \citep{domainbed} and WILDS \citep{wilds}. We recommend always including FedAvg as a baseline in all future evaluations. 
\end{remark}
\begin{remark}
{\bf Most centralized DG methods degrade in the FL setting.}
For image datasets, the DG methods adapted to the FL setting (except for GroupDRO) show significant degradation in performance compared to their centralized counterparts as can be seen when comparing the $C=1$ column to the $C > 1$ columns in \autoref{tab:img_main}. 
Further, degradation can be seen in PACS and CelebA when moving from the homogeneous client setting ($\lambda=1$) to the heterogeneous client setting ($\lambda = 0.1$). 
\end{remark}
\begin{remark} 
{\bf FL methods tackling client heterogeneity help convergence.}
   Notably, IWildCam and CivilComments datasets bring greater challenges in the model convergence as \autoref{tab:summary}. In this scenario, FL methods tackling the client heterogeneity are able to better tackle this challenge while other methods all failed in this context, see \autoref{tab:img_main} and \autoref{tab:text_main}. Given the fact that this is a common challenge in Federated DG, there's a need for future research to simultaneously enhance the performance with both client heterogeneity and train-test heterogeneity.
\end{remark}
\begin{remark} 
{\bf Federated DG methods still in need.}
Upon evaluating current Federated DG methods on larger network scales, they all fail to outperform ERM. Among the four methods we evaluating, FedDG performs the best and could achieve higher accuracy on PACS with $\lambda=0$. However, FedDG requires sharing all local datasets amplitude information to the aggregation server, thus brings privacy concerns. FedADG methods contains tens of hyperparameters, and due to the nature of GAN \citep{gan}, it is challenging to optimize. FedSR and FedGMA also did not outperform ERM objective. It shows a clear demand for further improvement. 
\end{remark}
\begin{remark}
{\bf Addressing the subpopulation shift could be an initial step.}
In the centralized setting, all evaluated methods surpassed the performance of ERM on two subpopulation shift datasets: CelebA and CivilComments. However, in a Federated context, only FedDG managed to outpace ERM on CelebA, and yet it couldn't match the performance of centralized approaches. Further exploration in federated DG is in demand to bridge this gap.
\end{remark}
\begin{remark}
{\bf The performance of real-world data significantly degrades as  $\lambda$ decreases.}
This can be seen from IWildCam and Py150 dataset at \autoref{tab:img_main} and \autoref{tab:text_main}. While it is challenging and expensive to run models for IWildCam and Py150, they show the largest differences between methods and demonstrates the real-world challenge of Federated DG. We suggest including IWildCam and Py150 in most future DG evaluations given their unique nature across datasets.
\end{remark}
\begin{table}[!tb]
\caption{Test accuracy on PACS and IWildCam dataset with held-out-domain validation where FedAvg-ERM is the simple baseline (B). ``-'' means the method is not applicable in that context. Bold is for best and italics is for second best in each column. We report the standard deviation among 3 runs. Please see the \Cref{tab:img_main_app} in the appendix for higher precision report.} \label{tab:img_main}
\resizebox{\columnwidth}{!}{%
\begin{tabular}{@{}llllllllll@{}}
\toprule
 &  & \multicolumn{4}{c}{PACS ($D=2$)} & \multicolumn{4}{c}{IWildCam ($D=243$)} \\ \cmidrule(l){3-6} \cmidrule(l){7-10}  
 &  & $C=1$ & \multicolumn{3}{c}{$C=100$ (FL)} & $C=1$ & \multicolumn{3}{c}{$C=100$ (FL)} \\ \cmidrule(lr){4-6}  \cmidrule(l){8-10} 
 &  & (centralized) & $\lambda=1$ & $\lambda=0.1$ & $\lambda=0$ & (centralized) & $\lambda=1$ & $\lambda=0.1$ & $\lambda=0$ \\ 
 \midrule
 \rot{B} & FedAvg-ERM & $\bf 0.94\pm0.01$ & $ \bf 0.95\pm0.02$ & $\mathbf{0.96\pm0.02}$ & $\mathbf{0.95\pm0.02}$ & $\mathit{0.34\pm0.00}$ & $\mathbf{0.30\pm0.00}$ & $0.25\pm0.00$ & $\mathbf{0.20\pm0.01}$ \\
 \midrule
 \multirow{6}{*}{\rot{DG Adapted}}& IRM & $0.92\pm0.01$ & $0.92\pm0.04$ & $0.86\pm0.04$ & - & $0.32\pm0.00$ & $0.20\pm0.01$ & $0.18\pm0.01$ & - \\
 & Fish & $\mathit{0.94\pm0.02}$ & $0.65\pm0.13$ & $0.35\pm0.11$ & - & $\mathbf{0.35\pm0.00}$ & $0.18\pm0.00$ & $0.15\pm0.01$ & - \\
 & Mixup & $0.92\pm0.01$ & $0.89\pm0.05$ & $0.83\pm0.03$ & - & $0.33\pm0.01$ & $0.09\pm0.01$ & $0.07\pm0.01$ & - \\
 & MMD & $0.93\pm0.02$ & $0.92\pm0.04$ & $0.86\pm0.04$ & - & $0.32\pm0.00$ & $0.19\pm0.01$ & $0.16\pm0.01$ & - \\
 & DeepCoral & $ 0.93\pm0.01$ & $0.92\pm0.04$ & $0.86\pm0.04$ & - & $0.33\pm0.01$ & $0.19\pm0.01$ & $0.16\pm0.01$ & - \\
 & GroupDRO & $0.93\pm0.01$ & $\mathit{0.94\pm0.03}$ & $0.95\pm0.02$ & - & $0.21\pm0.00$ & $0.13\pm0.01$ & $0.20\pm0.01$ & - \\ 
 \midrule
\multirow{2}{*}{\rot{FL}} & FedProx & - & $0.90\pm0.03$ & $0.89\pm0.03$ & $0.90\pm0.02$ & - & $0.25\pm0.00$ & $0.20\pm0.00$ & $0.17\pm0.00$ \\
 & Scaffold & - & $0.90\pm0.03$ & $0.89\pm0.02$ & $0.90\pm0.02$ & - & $\mathit{0.28\pm0.00}$ & $\mathbf{0.26\pm0.00}$ & $\mathit{0.18\pm0.01}$ \\ 
 & AFL & - & $ 0.93\pm0.03 $ & $ 0.92\pm0.04$ & $ 0.91\pm0.04 $ & - & $ 0.26\pm0.01$ & $ 0.16\pm0.01$ & $0.03\pm0.00$ \\
 \midrule
\multirow{4}{*}{\rot{FDG}} & FedDG & - & $0.93\pm0.02$ & $\mathit{0.95\pm0.02}$ &$ \mathit{0.94\pm0.02}$ & - & $0.27\pm0.00$ & $0.24\pm0.00$ & $0.17\pm0.00$ \\
 & FedADG & - & $0.94\pm0.01$ & $0.94\pm0.00$ & $0.94\pm0.01$ & - & $0.26\pm0.01$ & $\mathit{0.25\pm0.01}$ & $0.16\pm0.00$ \\
 & FedSR & - & $0.64\pm0.16$ & $0.55\pm0.09$ & $0.54\pm0.09$ & - & $0.18\pm0.01$& $0.14\pm0.01$ & $0.09\pm0.00$ \\
 & FedGMA & - & $0.75\pm0.05$ & $0.73\pm0.09$ & $0.72\pm0.01$ & - & $0.22\pm0.00$ & $0.15\pm0.00$ & $0.09\pm0.00$ \\ \bottomrule
\end{tabular}}
\vspace{-1em}
\end{table}
\begin{table}[!ht]
\caption{Test accuracy on CelebA and Camelyon17 dataset with held-out-domain validation where FedAvg-ERM is the simple baseline (B). ``-'' means the method is not applicable in that context. Bold is for best and italics is for second best in each column. We report the standard deviation among 3 runs. Please see the \Cref{tab:img_main2_app} in the appendix for higher precision report.} \label{tab:img_main2}
\resizebox{\columnwidth}{!}{%
\begin{tabular}{@{}llllllllll@{}}
\toprule
 &  & \multicolumn{4}{c}{CelebA ($D=2$)} & \multicolumn{4}{c}{Camelyon17 ($D=3$)} \\ \cmidrule(l){3-6} \cmidrule(l){7-10}  
 &  & $C=1$ & \multicolumn{3}{c}{$C=100$ (FL)} & $C=1$ & \multicolumn{3}{c}{$C=100$ (FL)} \\ \cmidrule(lr){4-6}  \cmidrule(l){8-10} 
 &  & (centralized) & $\lambda=1$ & $\lambda=0.1$ & $\lambda=0$ & (centralized) & $\lambda=1$ & $\lambda=0.1$ & $\lambda=0$ \\ 
 \midrule
 \rot{B} & FedAvg-ERM & $0.77\pm0.04$ & $0.61\pm0.02$ & $0.52\pm0.04$ & $0.45\pm0.02$ & $0.90\pm0.01$ & $\mathit{0.95\pm0.01}$ & $\mathit{0.95\pm0.00}$ & $\bf 0.95\pm0.00$ \\
 \midrule
 \multirow{6}{*}{\rot{DG Adapted}}& IRM & $\mathbf{0.89\pm0.01}$ & $0.71\pm0.05$ & $0.74\pm0.01$ & - & $0.91\pm0.06$ & $0.95\pm0.00$ & $0.95\pm0.00$ & - \\
 & Fish & $\mathit{0.88\pm0.01}$ & $0.14\pm0.16$ & $0.07\pm0.05$ & - & $0.92\pm0.01$ & $0.88\pm0.01$ & $0.88\pm0.01$ & - \\
 & Mixup & $0.29\pm0.498$ & $0.13\pm0.22$ & $0.13\pm0.22$ & - & $\mathit{0.94\pm0.01}$ & $\mathbf{0.95\pm0.00}$ & $0.95\pm0.01$ & - \\
 & MMD & $0.84\pm0.05$ & $0.81\pm0.03$ & $0.77\pm0.03$ & - & $0.92\pm0.03$ & $0.95\pm0.00$ & $0.95\pm0.01$ & - \\
 & DeepCoral & $0.85\pm0.04$ & $\mathbf{0.81\pm0.02}$ & $\mathbf{0.78\pm0.02}$ & - & $\mathbf{0.94\pm0.01}$ & $0.95\pm0.00$ & $0.95\pm0.00$ & - \\
 & GroupDRO & $0.87\pm0.03$ & $\mathit{0.81\pm0.03}$ & $0.76\pm0.06$ & - & $0.92\pm0.03$ & $0.95\pm0.01$ & $\mathbf{0.95\pm0.00}$ & - \\
 \midrule
\multirow{3}{*}{\rot{FL}} & FedProx & - & $0.13\pm0.24$ & $0.12\pm0.25$ & $0.00\pm0.00$ & - & $0.94\pm0.00$ & $0.94\pm0.00$ & $\mathit{0.94\pm0.01}$ \\
 & Scaffold & - & $0.73\pm0.01$ & $\mathit{0.78\pm0.02}$ & $\bf 0.80\pm0.02$ & - & $0.94\pm0.00$ & $0.93\pm0.00$ & $0.93\pm0.00$ \\
 & AFL & - & $0.81\pm0.02$ & $0.83\pm0.01$ & $0.83\pm0.01$ & - & $ 0.94\pm0.01$ & $ 0.95\pm0.01$ & $ 0.93\pm0.01$ \\
 \midrule
\multirow{4}{*}{\rot{FDG}} & FedDG & - & $0.56\pm0.18$ & $0.53\pm0.11$ & $0.46\pm0.20$ & - & $0.86\pm0.01$ & $0.86\pm0.00$ & $0.87\pm0.02$ \\
 & FedADG & - & $0.67\pm0.00$ & $0.67\pm0.01$ & $\mathit{0.60\pm0.01}$ & - & $0.94\pm0.00$ & $0.93\pm0.00$ & $0.93\pm0.00$ \\
 & FedSR & - & $0.38\pm0.02$ & $0.36\pm0.04$ & $0.38\pm0.02$ & - & $0.93\pm0.01$ & $0.92\pm0.01$ & $0.93\pm0.00$ \\
 & FedGMA & - & $0.62\pm0.01$ & $0.62\pm0.02$ & $0.49\pm0.01$ & - & $0.90\pm0.01$ & $0.85\pm0.01$ & $0.82\pm0.03$ \\ \bottomrule
\end{tabular}}

\end{table}
\begin{table}[!ht]
\caption{Test accuracy on CivilComments and Py150 datasets with held-out-domain validation where FedAvg-ERM is the simple baseline (B). ``-'' means the method is not applicable in that context. Bold is for best and italics is for second best in each column. FedDG and FedADG are designed for image dataset and thus not applicable for CivilComments and Py150. We report the standard deviation among 3 runs. Please see the \Cref{tab:text_main_app} in the appendix for higher precision report.}\label{tab:text_main}
\resizebox{\columnwidth}{!}{%
\begin{tabular}{@{}llllllllll@{}}
\toprule
 &  & \multicolumn{4}{c}{CivilComments ($D=16$)} & \multicolumn{4}{c}{Py150 ($D=5477$)} \\ \cmidrule(l){3-6} \cmidrule(l){7-10}  
 &  & $C=1$ & \multicolumn{3}{c}{$C=100$ (FL)} & $C=1$ & \multicolumn{3}{c}{$C=100$ (FL)} \\ \cmidrule(lr){4-6}  \cmidrule(l){8-10} 
 &  & (centralized) & $\lambda=1$ & $\lambda=0.1$ & $\lambda=0$ & (centralized) & $\lambda=1$ & $\lambda=0.1$ & $\lambda=0$ \\ 
\midrule
\rot{B}& FedAvg-ERM & $0.54\pm0.00$ & $0.36\pm0.03$ & $0.35\pm0.02$ & $0.33\pm0.01$ & $\mathbf{0.68}\pm0.00$ & $\mathbf{0.68}\pm0.00$ & $\mathit{0.65\pm0.00}$ & $\mathit{0.64}\pm0.00$ \\
\midrule
 \multirow{5}{*}{\rot{DG Adapted}} & IRM & $0.64\pm0.00$ & $\mathit{0.59\pm0.02}$ & $0.53\pm0.04$ & - & $\mathit{0.68}\pm0.00$ & $\mathit{0.66\pm0.00}$ & $\mathbf{0.65\pm0.00}$ & $0.63\pm0.00$ \\
 & Fish & $\mathbf{0.67}\pm0.00$ & $0.42\pm0.16$ & $0.34\pm0.17$ & - & $0.66\pm0.00$ & $0.65\pm0.00$ & $0.65\pm0.00$ & $\mathbf{0.64\pm0.00}$ \\
 & MMD & $\mathit{0.65}\pm0.00$ & $\mathbf{0.63\pm0.01}$ & $\mathbf{0.61\pm0.01}$ & - & $0.66\pm0.00$ & $0.63\pm0.00$ & $0.63\pm0.00$ & $0.61\pm0.00$ \\
 & DeepCoral & $0.59\pm0.00$ & $0.52\pm0.06$ & $0.46\pm0.08$ & - & $0.66\pm0.00$ & $0.65\pm0.00$ & $0.65\pm0.00$ & $0.64\pm0.00$ \\
 & GroupDRO & $0.64\pm0.00$ & $0.48\pm0.01$ & $\mathit{0.47\pm0.00}$ & - & $0.51\pm0.00$ & $0.59\pm0.00$ & $0.60\pm0.00$ & $0.61\pm0.00$ \\ 
 \midrule
\multirow{3}{*}{\rot{FL}} & FedProx & - & $0.18\pm0.03$ & $0.17\pm0.03$ & $0.17\pm0.04$ & - & $0.64\pm0.00$ & $0.63\pm0.00$ & $0.61\pm0.00$ \\
 & Scaffold & - & $0.39\pm0.02$ & $0.38\pm0.02$ & $\mathit{0.33\pm0.01}$ & - & $0.64\pm0.00$ & $0.64\pm0.00$ & $0.62\pm0.00$ \\ 
& AFL & - & $0.55\pm0.02$ & $0.47\pm0.01$ & $\mathbf{0.44\pm0.02}$ & - & $0.49\pm0.00$ & $0.49\pm0.00$ & $0.47\pm0.00$ \\
 \midrule
 
\multirow{2}{*}{\rot{FDG}} & FedSR & - & $0.36\pm0.00$ & $0.34\pm0.00$ & $0.32\pm0.00$ & - & $0.53\pm0.00$ & $0.53\pm0.00$ & $0.45\pm0.00$ \\
 & FedGMA & - & $0.21\pm0.03$ & $0.20\pm0.02$ & $0.20\pm0.02$ & - & $0.62\pm0.00$ & $0.61\pm0.00$ & $0.60\pm0.00$ \\ \bottomrule
\end{tabular}}
\vspace{-1em}
\end{table}

\noindent{\bf Additional DG challenges from FL.}
For further understanding, we explore some additional questions on several smaller datasets because it is computationally feasible to train many different models on these datasets.
Specifically, we explore how the number of clients, amount of communication (i.e., the number of server aggregations) in the federated setup, and client heterogeneity affects the performance of various methods.
The figures and detailed analysis are provided in the Appendix but we highlight two remarks here.
\begin{remark}\label{remark:num_clients}
{\bf   The number of clients $C$ strongly influences performance}. Performance in DG plunges from $90\%$ to as low as $10\%$ as $C$ shifts from $1$ to $200$ as suggested in \autoref{fig:num_client}. We explore this on $4$ representative methods on three datasets. We advocate for future Federated DG assessments to consider larger number of clients, like $50$ to $100$ rather than only considering small numbers of clients. This demonstrates pressing, unaddressed challenges in Federated DG when grappling with many clients. Notably, FedADG and FedSR degrade faster with increasing number of clients. 
\end{remark}
\begin{remark}
{\bf The number of communications does not monotonically affect the DG performance.}
In the FL context, we note a unique implicit regularization phenomenon: optimal performance is achieved in relatively few communication rounds \autoref{fig:num_communication}. For instance, with PACS, the ideal communication round is just $10$ (while maintaining constant computation). Contrarily, for in-domain tasks, FL theory indicates that increased communication enhances performance, as shown by \citep[Theorem 5.1]{stich2018local}. Investigating how DG accuracy relates to communication rounds, and potential implicit regularization through early stopping, offers a compelling direction for future research.
\end{remark}

\section{Conclusion and Discussion} \label{sec:discussion}
We've established a benchmark methodology for evaluating DG tasks in the FL regime, assessing 14 prominent methods on $7$ versatile datasets. Our findings indicate that Federated DG remains an unresolved challenge, with detailed gaps outlined in appendix \autoref{tab:gap}. \noindent{\bf Recommendations for future evaluations of Federated DG.}
\noindent{\bf 1)} Evaluation on the dataset where $R_\text{DG} < 1$. If $R_\text{DG}\approx 1$, there will be no real train-test heterogeneity.
\noindent{\bf 2)} The FL regime introduces two main challenges for domain generalization (DG) methods. Firstly, the complete heterogeneity scenario limits the exchange of information between domains. Secondly, when ($R_\text{FL} < 1$), it poses convergence challenges for the methods. One way to produce lower $R_\text{FL}$ is to increase the number of clients. In future evaluations, it would be valuable to assess the capabilities and limitations of proposed methods in handling these challenges.
\noindent{\bf 3)} Always include FedAvg as the baseline comparison. \noindent{\bf 4)} Evaluation on both Subpopulation shift datasets and general DG datasets since the methods might perform differently. \noindent{\bf Suggestions for future work in Federated DG.}
\noindent{\bf 1)} The federated domain generalization (DG) task remains unsolved, and further work is needed to address the challenges.
\noindent{\bf 2)} While our primary focus remains on domain generalization (DG), it is important for future methods to also address the issue of improving convergence when the FL learning rate ($R_\text{FL}$) is small. This consideration holds particular significance in real-world applications where efficient convergence is crucial. We hope this work provide a better foundation for future work in Federated DG and spur research progress.
\clearpage
\section*{Reproducibility Statement}
Code for reproduce the results is available at the following link: 
\begin{center}
\href{https://github.com/inouye-lab/FedDG_Benchmark}{https://github.com/inouye-lab/FedDG\_Benchmark}.
\end{center}
We include detailed documentation in using our code to reproduce the results throughout the paper. We also provide documentation in adding new algorithm's DG evaluation in the FL context. The code contains all the experiments included in section 4. To help reproducibility, we have 
\begin{itemize}
    \item [1.] Include the requirements.txt file generated by conda to help environment setup. 
    \item [2.] The code contains a detailed Readme file to help code reading, experiment reproducing and future new method implementing. 
    \item [3.] We include the config file identical to the paper including the seed to help reproducibility. 
    \item [4.] We include the command needed to run the experiments.
    \item [5.] The model structure and tokenizers are available.
    \item [6.] We include the hyperparameter grid search configuration to reproduce \autoref{tab:hyperparameter}.
    \item [7.] The dataset preprocessing scripts, transform functions are also included.
\end{itemize}
We hope our reproducible and extendable codebase could help both new methods implementations and better evaluation in the Federated DG fields.
\section*{Acknowledgement}
This work was supported by Army Research Lab under Contract No. W911NF-2020-221. R.B. and D.I. also acknowledge support from NSF (IIS-2212097) and ONR (N00014-23-C-1016). Any opinions, findings, and conclusions or recommendations expressed in this material are those of the authors and do not necessarily reflect the views of the
sponsor(s).

\bibliography{references}

\begin{thebibliography}{48}
\providecommand{\natexlab}[1]{#1}
\providecommand{\url}[1]{\texttt{#1}}
\expandafter\ifx\csname urlstyle\endcsname\relax
  \providecommand{\doi}[1]{doi: #1}\else
  \providecommand{\doi}{doi: \begingroup \urlstyle{rm}\Url}\fi

\bibitem[Arjovsky et~al.(2019)Arjovsky, Bottou, Gulrajani, and Lopez-Paz]{irm}
Martin Arjovsky, L{\'e}on Bottou, Ishaan Gulrajani, and David Lopez-Paz.
\newblock Invariant risk minimization.
\newblock \emph{arXiv preprint arXiv:1907.02893}, 2019.

\bibitem[Basu et~al.(2019)Basu, Data, Karakus, and Diggavi]{basu2019qsparse}
Debraj Basu, Deepesh Data, Can Karakus, and Suhas Diggavi.
\newblock Qsparse-local-sgd: Distributed sgd with quantization, sparsification
  and local computations.
\newblock \emph{Advances in Neural Information Processing Systems}, 32, 2019.

\bibitem[Blanchard et~al.(2011)Blanchard, Lee, and Scott]{feNIPS2011_b571ecea}
Gilles Blanchard, Gyemin Lee, and Clayton Scott.
\newblock Generalizing from several related classification tasks to a new
  unlabeled sample.
\newblock In J.~Shawe-Taylor, R.~Zemel, P.~Bartlett, F.~Pereira, and K.Q.
  Weinberger, editors, \emph{Advances in Neural Information Processing
  Systems}, volume~24. Curran Associates, Inc., 2011.
\newblock URL
  \url{https://proceedings.neurips.cc/paper/2011/file/b571ecea16a9824023ee1af16897a582-Paper.pdf}.

\bibitem[Caldas et~al.(2018)Caldas, Duddu, Wu, Li, Kone{\v{c}}n{\`y}, McMahan,
  Smith, and Talwalkar]{leaf}
Sebastian Caldas, Sai Meher~Karthik Duddu, Peter Wu, Tian Li, Jakub
  Kone{\v{c}}n{\`y}, H~Brendan McMahan, Virginia Smith, and Ameet Talwalkar.
\newblock Leaf: A benchmark for federated settings.
\newblock \emph{arXiv preprint arXiv:1812.01097}, 2018.

\bibitem[Chen et~al.(2022)Chen, Gao, Kuang, Li, and Ding]{pflbench}
Daoyuan Chen, Dawei Gao, Weirui Kuang, Yaliang Li, and Bolin Ding.
\newblock p{FL}-bench: A comprehensive benchmark for personalized federated
  learning.
\newblock In \emph{Thirty-sixth Conference on Neural Information Processing
  Systems Datasets and Benchmarks Track}, 2022.
\newblock URL \url{https://openreview.net/forum?id=2ptbv_JjYKA}.

\bibitem[Du et~al.(2021)Du, Xu, Wu, and Tong]{du2021fairness}
Wei Du, Depeng Xu, Xintao Wu, and Hanghang Tong.
\newblock Fairness-aware agnostic federated learning.
\newblock In \emph{Proceedings of the 2021 SIAM International Conference on
  Data Mining (SDM)}, pages 181--189. SIAM, 2021.

\bibitem[Goodfellow et~al.(2020)Goodfellow, Pouget-Abadie, Mirza, Xu,
  Warde-Farley, Ozair, Courville, and Bengio]{gan}
Ian Goodfellow, Jean Pouget-Abadie, Mehdi Mirza, Bing Xu, David Warde-Farley,
  Sherjil Ozair, Aaron Courville, and Yoshua Bengio.
\newblock Generative adversarial networks.
\newblock \emph{Communications of the ACM}, 63\penalty0 (11):\penalty0
  139--144, 2020.

\bibitem[Graham(1969)]{graham1969bounds}
Ronald~L. Graham.
\newblock Bounds on multiprocessing timing anomalies.
\newblock \emph{SIAM journal on Applied Mathematics}, 17\penalty0 (2):\penalty0
  416--429, 1969.

\bibitem[Gretton et~al.(2006)Gretton, Borgwardt, Rasch, Sch{\"o}lkopf, and
  Smola]{mmd}
Arthur Gretton, Karsten Borgwardt, Malte Rasch, Bernhard Sch{\"o}lkopf, and
  Alex Smola.
\newblock A kernel method for the two-sample-problem.
\newblock \emph{Advances in neural information processing systems}, 19, 2006.

\bibitem[Gulrajani and Lopez-Paz(2020)]{domainbed}
Ishaan Gulrajani and David Lopez-Paz.
\newblock In search of lost domain generalization.
\newblock \emph{arXiv preprint arXiv:2007.01434}, 2020.

\bibitem[Hanzely et~al.(2020)Hanzely, Hanzely, Horv{\'a}th, and
  Richt{\'a}rik]{hanzely2020lower}
Filip Hanzely, Slavom{\'\i}r Hanzely, Samuel Horv{\'a}th, and Peter
  Richt{\'a}rik.
\newblock Lower bounds and optimal algorithms for personalized federated
  learning.
\newblock \emph{Advances in Neural Information Processing Systems},
  33:\penalty0 2304--2315, 2020.

\bibitem[He et~al.(2016)He, Zhang, Ren, and Sun]{resnet}
Kaiming He, Xiangyu Zhang, Shaoqing Ren, and Jian Sun.
\newblock Deep residual learning for image recognition.
\newblock In \emph{Proceedings of the IEEE conference on computer vision and
  pattern recognition}, pages 770--778, 2016.

\bibitem[Hsu et~al.(2019)Hsu, Qi, and Brown]{hsu2019measuring}
Tzu-Ming~Harry Hsu, Hang Qi, and Matthew Brown.
\newblock Measuring the effects of non-identical data distribution for
  federated visual classification.
\newblock \emph{arXiv preprint arXiv:1909.06335}, 2019.

\bibitem[Karimireddy et~al.(2020)Karimireddy, Kale, Mohri, Reddi, Stich, and
  Suresh]{karimireddy2020scaffold}
Sai~Praneeth Karimireddy, Satyen Kale, Mehryar Mohri, Sashank Reddi, Sebastian
  Stich, and Ananda~Theertha Suresh.
\newblock Scaffold: Stochastic controlled averaging for federated learning.
\newblock In \emph{International Conference on Machine Learning}, pages
  5132--5143. PMLR, 2020.

\bibitem[Khaled et~al.(2020)Khaled, Mishchenko, and
  Richt{\'a}rik]{khaled2020tighter}
Ahmed Khaled, Konstantin Mishchenko, and Peter Richt{\'a}rik.
\newblock Tighter theory for local sgd on identical and heterogeneous data.
\newblock In \emph{International Conference on Artificial Intelligence and
  Statistics}, pages 4519--4529. PMLR, 2020.

\bibitem[Kingma and Ba(2014)]{adam}
Diederik~P Kingma and Jimmy Ba.
\newblock Adam: A method for stochastic optimization.
\newblock \emph{arXiv preprint arXiv:1412.6980}, 2014.

\bibitem[Koh et~al.(2021)Koh, Sagawa, Marklund, Xie, Zhang, Balsubramani, Hu,
  Yasunaga, Phillips, Gao, et~al.]{wilds}
Pang~Wei Koh, Shiori Sagawa, Henrik Marklund, Sang~Michael Xie, Marvin Zhang,
  Akshay Balsubramani, Weihua Hu, Michihiro Yasunaga, Richard~Lanas Phillips,
  Irena Gao, et~al.
\newblock Wilds: A benchmark of in-the-wild distribution shifts.
\newblock In \emph{International Conference on Machine Learning}, pages
  5637--5664. PMLR, 2021.

\bibitem[Kone{\v{c}}n{\`y} et~al.(2016)Kone{\v{c}}n{\`y}, McMahan, Yu,
  Richt{\'a}rik, Suresh, and Bacon]{konevcny2016federated}
Jakub Kone{\v{c}}n{\`y}, H~Brendan McMahan, Felix~X Yu, Peter Richt{\'a}rik,
  Ananda~Theertha Suresh, and Dave Bacon.
\newblock Federated learning: Strategies for improving communication
  efficiency.
\newblock \emph{arXiv preprint arXiv:1610.05492}, 2016.

\bibitem[Li et~al.(2018)Li, Pan, Wang, and Kot]{mmd_dg}
Haoliang Li, Sinno~Jialin Pan, Shiqi Wang, and Alex~C Kot.
\newblock Domain generalization with adversarial feature learning.
\newblock In \emph{Proceedings of the IEEE conference on computer vision and
  pattern recognition}, pages 5400--5409, 2018.

\bibitem[Li et~al.(2020)Li, Sahu, Zaheer, Sanjabi, Talwalkar, and
  Smith]{li2020federated}
Tian Li, Anit~Kumar Sahu, Manzil Zaheer, Maziar Sanjabi, Ameet Talwalkar, and
  Virginia Smith.
\newblock Federated optimization in heterogeneous networks.
\newblock \emph{Proceedings of Machine learning and systems}, 2:\penalty0
  429--450, 2020.

\bibitem[Li et~al.(2019)Li, Huang, Yang, Wang, and Zhang]{li2019convergence}
Xiang Li, Kaixuan Huang, Wenhao Yang, Shusen Wang, and Zhihua Zhang.
\newblock On the convergence of fedavg on non-iid data.
\newblock \emph{arXiv preprint arXiv:1907.02189}, 2019.

\bibitem[Liang et~al.(2019)Liang, Shen, Liu, Pan, Chen, and
  Cheng]{liang2019variance}
Xianfeng Liang, Shuheng Shen, Jingchang Liu, Zhen Pan, Enhong Chen, and Yifei
  Cheng.
\newblock Variance reduced local sgd with lower communication complexity.
\newblock \emph{arXiv preprint arXiv:1912.12844}, 2019.

\bibitem[Liu et~al.(2021)Liu, Chen, Qin, Dou, and Heng]{feddg}
Quande Liu, Cheng Chen, Jing Qin, Qi~Dou, and Pheng-Ann Heng.
\newblock Feddg: Federated domain generalization on medical image segmentation
  via episodic learning in continuous frequency space.
\newblock In \emph{Proceedings of the IEEE/CVF Conference on Computer Vision
  and Pattern Recognition}, pages 1013--1023, 2021.

\bibitem[Liu et~al.(2015)Liu, Luo, Wang, and Tang]{celeba}
Ziwei Liu, Ping Luo, Xiaogang Wang, and Xiaoou Tang.
\newblock Deep learning face attributes in the wild.
\newblock In \emph{Proceedings of the IEEE international conference on computer
  vision}, pages 3730--3738, 2015.

\bibitem[McMahan et~al.(2017)McMahan, Moore, Ramage, Hampson, and
  y~Arcas]{fedavg}
Brendan McMahan, Eider Moore, Daniel Ramage, Seth Hampson, and Blaise~Aguera
  y~Arcas.
\newblock Communication-efficient learning of deep networks from decentralized
  data.
\newblock In \emph{Artificial intelligence and statistics}, pages 1273--1282.
  PMLR, 2017.

\bibitem[Mohri et~al.(2019)Mohri, Sivek, and Suresh]{mohri2019agnostic}
Mehryar Mohri, Gary Sivek, and Ananda~Theertha Suresh.
\newblock Agnostic federated learning.
\newblock In \emph{International Conference on Machine Learning}, pages
  4615--4625. PMLR, 2019.

\bibitem[Nguyen et~al.(2022)Nguyen, Lim, and Torr]{nguyenfedsr}
A~Tuan Nguyen, Ser-Nam Lim, and Philip~HS Torr.
\newblock Fedsr: Simple and effective domain generalization method for
  federated learning.
\newblock \emph{International Conference on Learning Representations}, 2022.

\bibitem[Radford et~al.(2019)Radford, Wu, Child, Luan, Amodei, Sutskever,
  et~al.]{radford2019language}
Alec Radford, Jeffrey Wu, Rewon Child, David Luan, Dario Amodei, Ilya
  Sutskever, et~al.
\newblock Language models are unsupervised multitask learners.
\newblock \emph{OpenAI blog}, 1\penalty0 (8):\penalty0 9, 2019.

\bibitem[Rame et~al.(2022)Rame, Dancette, and Cord]{rame2022fishr}
Alexandre Rame, Corentin Dancette, and Matthieu Cord.
\newblock Fishr: Invariant gradient variances for out-of-distribution
  generalization.
\newblock In \emph{International Conference on Machine Learning}, pages
  18347--18377. PMLR, 2022.

\bibitem[Reddi et~al.(2020)Reddi, Charles, Zaheer, Garrett, Rush,
  Kone{\v{c}}n{\`y}, Kumar, and McMahan]{reddi2020adaptive}
Sashank Reddi, Zachary Charles, Manzil Zaheer, Zachary Garrett, Keith Rush,
  Jakub Kone{\v{c}}n{\`y}, Sanjiv Kumar, and H~Brendan McMahan.
\newblock Adaptive federated optimization.
\newblock \emph{arXiv preprint arXiv:2003.00295}, 2020.

\bibitem[Sagawa et~al.(2019)Sagawa, Koh, Hashimoto, and Liang]{groupdro}
Shiori Sagawa, Pang~Wei Koh, Tatsunori~B Hashimoto, and Percy Liang.
\newblock Distributionally robust neural networks for group shifts: On the
  importance of regularization for worst-case generalization.
\newblock \emph{arXiv preprint arXiv:1911.08731}, 2019.

\bibitem[Sanh et~al.(2020)Sanh, Debut, Chaumond, and Wolf]{sanh2020distilbert}
Victor Sanh, Lysandre Debut, Julien Chaumond, and Thomas Wolf.
\newblock Distilbert, a distilled version of bert: smaller, faster, cheaper and
  lighter, 2020.

\bibitem[Shi et~al.(2021)Shi, Seely, Torr, Siddharth, Hannun, Usunier, and
  Synnaeve]{fish}
Yuge Shi, Jeffrey Seely, Philip~HS Torr, N~Siddharth, Awni Hannun, Nicolas
  Usunier, and Gabriel Synnaeve.
\newblock Gradient matching for domain generalization.
\newblock \emph{arXiv preprint arXiv:2104.09937}, 2021.

\bibitem[Stich(2018)]{stich2018local}
Sebastian~U Stich.
\newblock Local sgd converges fast and communicates little.
\newblock \emph{arXiv preprint arXiv:1805.09767}, 2018.

\bibitem[Sun and Saenko(2016)]{coral}
Baochen Sun and Kate Saenko.
\newblock Deep coral: Correlation alignment for deep domain adaptation.
\newblock In \emph{European conference on computer vision}, pages 443--450.
  Springer, 2016.

\bibitem[Tenison et~al.(2022{\natexlab{a}})Tenison, Sreeramadas, Mugunthan,
  Oyallon, Belilovsky, and Rish]{tenison2022}
Irene Tenison, Sai~Aravind Sreeramadas, Vaikkunth Mugunthan, Edouard Oyallon,
  Eugene Belilovsky, and Irina Rish.
\newblock Gradient masked averaging for federated learning, 2022{\natexlab{a}}.
\newblock URL \url{https://arxiv.org/abs/2201.11986}.

\bibitem[Tenison et~al.(2022{\natexlab{b}})Tenison, Sreeramadas, Mugunthan,
  Oyallon, Belilovsky, and Rish]{tenison2022gradient}
Irene Tenison, Sai~Aravind Sreeramadas, Vaikkunth Mugunthan, Edouard Oyallon,
  Eugene Belilovsky, and Irina Rish.
\newblock Gradient masked averaging for federated learning.
\newblock \emph{arXiv preprint arXiv:2201.11986}, 2022{\natexlab{b}}.

\bibitem[Terrail et~al.(2022)Terrail, Ayed, Cyffers, Grimberg, He, Loeb,
  Mangold, Marchand, Marfoq, Mushtaq, et~al.]{flamby}
Jean Ogier~du Terrail, Samy-Safwan Ayed, Edwige Cyffers, Felix Grimberg,
  Chaoyang He, Regis Loeb, Paul Mangold, Tanguy Marchand, Othmane Marfoq, Erum
  Mushtaq, et~al.
\newblock Flamby: Datasets and benchmarks for cross-silo federated learning in
  realistic healthcare settings.
\newblock \emph{arXiv preprint arXiv:2210.04620}, 2022.

\bibitem[Wang et~al.(2023)Wang, Xie, Huang, Lyu, Zheng, Zheng, and
  Jin]{wang2023fedmedgan}
Jinbao Wang, Guoyang Xie, Yawen Huang, Jiayi Lyu, Yefeng Zheng, Feng Zheng, and
  Yaochu Jin.
\newblock Fedmed-gan: Federated domain translation on unsupervised
  cross-modality brain image synthesis, 2023.

\bibitem[Wang et~al.(2019)Wang, Tuor, Salonidis, Leung, Makaya, He, and
  Chan]{wang2019adaptive}
Shiqiang Wang, Tiffany Tuor, Theodoros Salonidis, Kin~K Leung, Christian
  Makaya, Ting He, and Kevin Chan.
\newblock Adaptive federated learning in resource constrained edge computing
  systems.
\newblock \emph{IEEE journal on selected areas in communications}, 37\penalty0
  (6):\penalty0 1205--1221, 2019.

\bibitem[Xu et~al.(2020)Xu, Zhang, Ni, Li, Wang, Tian, and
  Zhang]{xu2020adversarial}
Minghao Xu, Jian Zhang, Bingbing Ni, Teng Li, Chengjie Wang, Qi~Tian, and
  Wenjun Zhang.
\newblock Adversarial domain adaptation with domain mixup.
\newblock In \emph{Proceedings of the AAAI conference on artificial
  intelligence}, pages 6502--6509, 2020.

\bibitem[Yu et~al.(2019)Yu, Yang, and Zhu]{yu2019parallel}
Hao Yu, Sen Yang, and Shenghuo Zhu.
\newblock Parallel restarted sgd with faster convergence and less
  communication: Demystifying why model averaging works for deep learning.
\newblock In \emph{Proceedings of the AAAI Conference on Artificial
  Intelligence}, volume~33, pages 5693--5700, 2019.

\bibitem[Yuan et~al.(2022)Yuan, Morningstar, Ning, and Singhal]{yuan2022what}
Honglin Yuan, Warren~Richard Morningstar, Lin Ning, and Karan Singhal.
\newblock What do we mean by generalization in federated learning?
\newblock In \emph{International Conference on Learning Representations}, 2022.
\newblock URL \url{https://openreview.net/forum?id=VimqQq-i_Q}.

\bibitem[Yurochkin et~al.(2019)Yurochkin, Agarwal, Ghosh, Greenewald, Hoang,
  and Khazaeni]{yurochkin2019bayesian}
Mikhail Yurochkin, Mayank Agarwal, Soumya Ghosh, Kristjan Greenewald,
  Trong~Nghia Hoang, and Yasaman Khazaeni.
\newblock Bayesian nonparametric federated learning of neural networks, 2019.

\bibitem[Zhang et~al.(2017)Zhang, Cisse, Dauphin, and Lopez-Paz]{mixup}
Hongyi Zhang, Moustapha Cisse, Yann~N Dauphin, and David Lopez-Paz.
\newblock mixup: Beyond empirical risk minimization.
\newblock \emph{arXiv preprint arXiv:1710.09412}, 2017.

\bibitem[Zhang et~al.(2021)Zhang, Lei, Shi, Huang, and Chen]{fedadg}
Liling Zhang, Xinyu Lei, Yichun Shi, Hongyu Huang, and Chao Chen.
\newblock Federated learning with domain generalization.
\newblock \emph{arXiv preprint arXiv:2111.10487}, 2021.

\bibitem[Zhao et~al.(2018)Zhao, Li, Lai, Suda, Civin, and
  Chandra]{zhao2018federated}
Yue Zhao, Meng Li, Liangzhen Lai, Naveen Suda, Damon Civin, and Vikas Chandra.
\newblock Federated learning with non-iid data.
\newblock \emph{arXiv preprint arXiv:1806.00582}, 2018.

\bibitem[Zhou et~al.(2022)Zhou, Azam, Brinton, and Inouye]{zhou2022efficient}
Zeyu Zhou, Sheikh~Shams Azam, Christopher Brinton, and David~I Inouye.
\newblock Efficient federated domain translation.
\newblock In \emph{The Eleventh International Conference on Learning
  Representations}, 2022.

\end{thebibliography}
\bibliographystyle{plainnat}

\newpage
\appendix
\renewcommand \thepart{}
\renewcommand \partname{}

\doparttoc 
\faketableofcontents 

\addcontentsline{toc}{section}{Appendix} 
\part{Appendix} 
\parttoc 

\section{Data Partition in Federated DG}

\subsection{\parname Algorithm and its guarantee}\label{sec:partition_detail}

Denote $\mathcal{P}_0$ as the  complete heterogeneous case corresponding to  \parname with $\lambda=0.$  We have the following guarantees on the optimality of $\mathcal{P}_0.$
\begin{proposition}
When $C\geq D,$ $\mathcal{P}_0$
 is optimal for \autoref{eqn:opt_part}. When $C<D,$ \autoref{eqn:opt_part} is NP hard, and $\mathcal{P}_0$ is a fast greed approximation.
\end{proposition}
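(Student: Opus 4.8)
The plan is to reduce \autoref{eqn:opt_part} to a purely combinatorial problem over the per-client sample counts and then handle the two regimes $C\ge D$ and $C<D$ separately. First, every feasible $\mathcal{P}$ obeys constraint $C_2$, so $\sum_c n_c = |\mathcal{S}| =: n$ is fixed and $\bar n = n/C$ is constant; hence minimizing $\frac{1}{C-1}\sum_c\|n_c-\bar n\|_2^2$ is equivalent to minimizing $\sum_c n_c^2$. Next I would extract the two structural consequences of $C_1$: when $C\le D$ the sets $\mathcal{D}_c$ are pairwise disjoint, so together with exhaustiveness each domain's samples lie entirely on one client; when $C>D$ each client holds exactly one domain, so the clients split into non-empty groups $G_1,\dots,G_D$ with $|G_d|=k_d$, $\sum_d k_d = C$, and the $n_d$ samples of domain $d$ are distributed among the clients in $G_d$.

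\textbf{Case $C\ge D$.} The boundary case $C=D$ is immediate: disjointness, exhaustiveness, and $\mathcal{S}_c\neq\emptyset$ force $\{\mathcal{D}_c\}$ to be a partition of $[D]$ into $D$ non-empty blocks, i.e.\ a bijection, so the multiset $\{n_c\}$ equals $\{n_d\}$ for \emph{every} feasible $\mathcal{P}$ and the objective is constant; $\mathcal{P}_0$ is then trivially optimal. For $C>D$, the group decomposition turns the objective into $\sum_d\sum_{c\in G_d} n_c^2$, which for fixed counts $(k_d)$ decouples over domains; by Schur-convexity of $x\mapsto x^2$ the inner minimum is the balanced integer split of $n_d$ into $k_d$ parts (sizes $\lfloor n_d/k_d\rfloor$ or $\lceil n_d/k_d\rceil$), whose value I denote $\phi_d(k_d)$. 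It remains to minimize the separable function $\sum_d \phi_d(k_d)$ over integers $1\le k_d\le n_d$ with $\sum_d k_d = C$. I would prove each $\phi_d$ is discretely convex (its first difference $\phi_d(k)-\phi_d(k+1)$ is non-increasing in $k$), so that a count vector is globally optimal as soon as no single unit can be moved between two domains to decrease the sum, and then verify that Step 1, Case II of $\mathcal{P}_0$ outputs such a locally (hence globally) optimal $(k_d)$.

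\textbf{Case $C<D$.} Disjointness of the $\mathcal{D}_c$ turns the objective into: partition the multiset $\{n_1,\dots,n_D\}$ into $C$ non-empty blocks minimizing the sum of squared block totals. I would show NP-hardness by reduction from PARTITION: for $C=2$, $\min(n_1^2+n_2^2)$ subject to $n_1+n_2=n$ is attained exactly when the $n_d$ admit an equal-sum split, so deciding whether the optimum meets its trivial lower bound decides PARTITION; padding with $C-2$ very large ``singleton'' domains (and reducing instead from 3-PARTITION for strong hardness) extends this to general $C<D$ — this is the scheduling-type problem studied by \citet{graham1969bounds}. Then I would observe that Step 1, Case I of $\mathcal{P}_0$ is precisely longest-processing-time-first list scheduling — sort the domains by decreasing $n_d$ and repeatedly give the next domain to the currently lightest client — which runs in $O(D\log D)$ time (linear with bucketing) and, via the same exchange/averaging estimates \citet{graham1969bounds} used for makespan, carries a constant-factor approximation guarantee for the sum-of-squares objective.

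\textbf{Main obstacle.} The delicate step is the last part of the $C>D$ analysis: showing that $\mathcal{P}_0$'s rule — at each stage split the domain with the largest current \emph{average} piece size $n_d/k_d$ — realizes a globally optimal count vector. This is not automatic, because ``largest average'' is a priori different from ``largest marginal reduction $\phi_d(k_d)-\phi_d(k_d+1)$'', which is the quantity the greedy-for-separable-convex-minimization argument actually needs; one must establish that the two selection rules agree along every trajectory the construction produces (they coincide for the makespan objective, but this must be argued, not assumed, for the variance objective). By comparison, the PARTITION reduction and the discrete-convexity lemma for $\phi_d$ are routine.
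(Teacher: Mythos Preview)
Your reduction to minimizing $\sum_c n_c^2$ and the subsequent group decomposition for $C>D$ are the right scaffolding, and considerably more careful than what the paper actually does. For $C<D$ you and the paper coincide: both identify the problem as multiway number partitioning and invoke \citet{graham1969bounds} for hardness and for the LPT greedy heuristic.

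The divergence is in the $C>D$ case. The paper's argument there is essentially an assertion: it rewrites the variance as a pairwise sum $\frac{1}{2C(C-1)}\sum_{i,j}\|n_i-n_j\|_2^2$, then claims this equals a sum over $D\times D$ pairs of per-domain averages $\tilde n_d$ (a step that already drops the multiplicities $k_d$ and is not correct as written), and finally states that because $\mathcal{P}_0$ always splits the domain with the largest current average, ``the discrepancy between $\tilde n_i$ and $\tilde n_j$ is minimized.'' No justification is offered for that last sentence; it is precisely the step you flag as the main obstacle.

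Your suspicion that this obstacle is genuine is correct, and in fact it cannot be repaired: the largest-average rule is \emph{not} optimal for the variance objective. Take $D=2$, $(n_1,n_2)=(100,11)$, $C=12$. Tracing Step~1, Case~II gives $(k_1,k_2)=(10,2)$, client sizes $\{10,\dots,10,6,5\}$, and $\sum_c(n_c-\bar n)^2 = 34.25$. But $(k_1,k_2)=(11,1)$ is feasible for $C_1\cap C_2$, has client sizes $\{10,9,\dots,9,11\}$, and $\sum_c(n_c-\bar n)^2 = 4.25$. So $\mathcal{P}_0$ is not the minimizer here. (It \emph{does} minimize the makespan on this instance---$\max n_c=10$ versus $11$---which is what the largest-average heuristic is really tracking, exactly as you guessed.) The upshot is that the obstacle you isolate is not merely ``delicate'': the paper's proof glosses over it, and the proposition as stated does not hold for the variance objective; your plan is sound up to that point, but the final verification step you propose will fail, not succeed.
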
 
\begin{proof}
  {\bf Case I.}  When $C\geq D,$ $\mathcal{P}_0$ generates $\{S_c\}, c\in[C]$ such that
    for all clients which share a same domain, their sample size is the same, that is, for all $c,c\in C'$ share some domain $d\in D,$ there holds
    \begin{equation}\label{eqn:inter_1}
        n_c=n_{c'}.
    \end{equation}
    Further, for sample variance, there holds 
    \begin{align}
        &\frac{1}{C-1}\sum_{c=1}^C \|n_c - \bar{n}\|_2^2\notag\\
        &=\frac{1}{2C(C-1)}\sum\limits_{i=1}^C\sum\limits_{j=1}^C \|n_i - \bar{n}-n_j+\bar{n}\|_2^2\notag\\
         &=\frac{1}{2C(C-1)}\sum\limits_{i=1}^C\sum\limits_{j=1}^C \|n_i -n_j\|_2^2.
    \end{align}
    Therefore, \autoref{eqn:opt_part} boils down to
    \begin{equation}
    \widehat{\mathcal{P}}\in\argmin\limits_{\mathcal{P} \in C_1\cap C_2 }\frac{1}{2C(C-1)}\sum\limits_{i=1}^C\sum\limits_{j=1}^C \|n_i -n_j\|_2^2.
\end{equation} 
Chaining with \autoref{eqn:inter_1}, we have
\begin{align}
    &\frac{1}{2C(C-1)}\sum\limits_{i=1}^C\sum\limits_{j=1}^C \|n_i -n_j\|_2^2= \frac{1}{2C(C-1)}\sum\limits_{i=1}^D\sum\limits_{j=1}^D \|\tilde{n}_i -\tilde{n}_j\|_2^2,
\end{align}
where $\tilde{n}_i $ is the same sample size of clients which share the domain $i.$ The equality comes from the second equation in \autoref{eqn:C>D}, where we reassign the same sample sizes for those $c'\in \{1,\dots, c\}$ which share the domain $d^*$  as
\begin{equation}
  \frac{n_{d^*}}{{\textstyle \sum_{c'=1}^{C}}\mathbbm{1}[d^*\in \mathcal{D}_{c'}]}.
\end{equation}
In addition, notice that we always divide the currently largest average domain $d^*$ between the new client and the original clients associated with it. That is, 
\begin{equation}
      d^* \in \argmax\limits_{d\in[D]} \frac{n_{d}}{{\textstyle \sum_{c'=1}^{C}}\mathbbm{1}[d\in \mathcal{D}_{c'}]}.
\end{equation}
Therefore, the discrepancy between $\tilde{n}_i$ and $\tilde{n}_j,$ $i,j\in[D]$ is minimized. It is straightforward to verify such partition $\mathcal{P}_0$ also satisfy the true partition (constraint). Thus, $\mathcal{P}_0$ is optimal for \autoref{eqn:opt_part}.

 {\bf Case II.}  When $C<D$, the problem corresponds to the classic NP-Hard problem known as Multiway Number Partitioning. We employ a well-known linear-time greedy algorithm to address this \citep{graham1969bounds}.

\end{proof}
\begin{figure}[!ht]
    \centering
    \includegraphics[width=\textwidth]{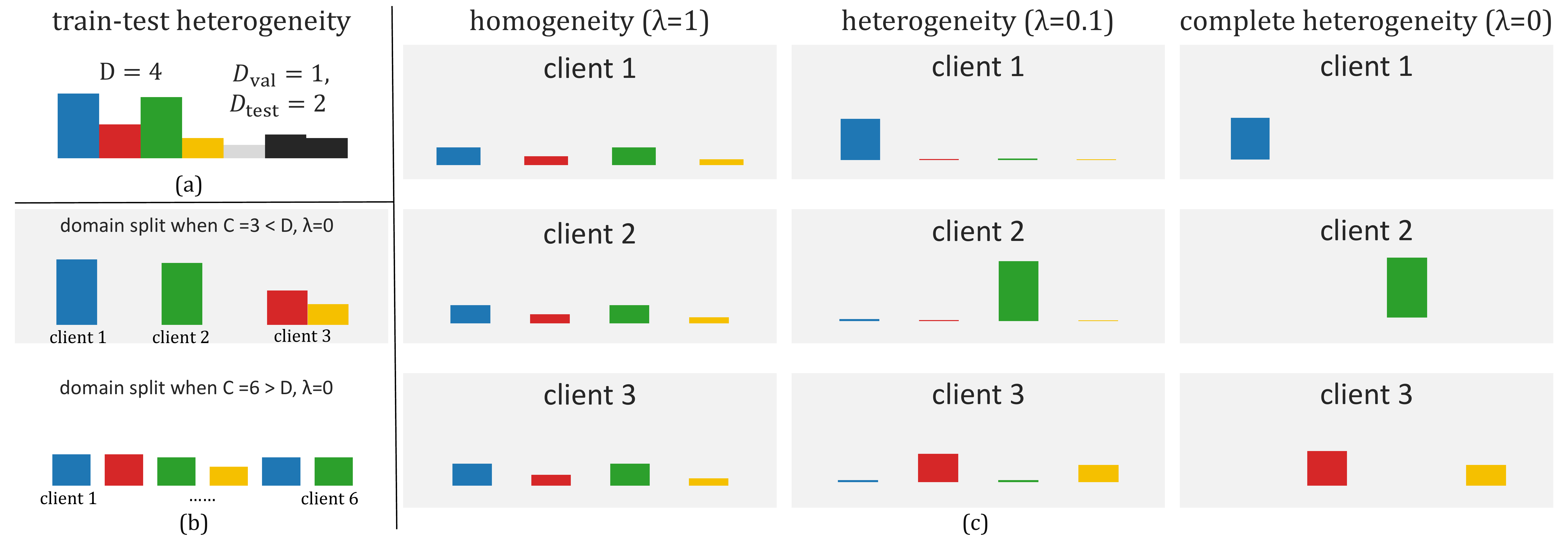}
    \caption{Distinct color refers to distinct domain data, and $\lambda$ is the domain balancing parameter.
(a): {train-test domain heterogeneity}. (b): domain partitioning when $C\leq D$ and domain partitioning when $C>D.$ (c): domain partitioning illustration when $C\leq D;$ homogeneous ($\lambda=1$),  heterogeneous ($\lambda=0.1$), and extreme heterogeneous  ($\lambda=0$). 
   }
    \label{fig:heterogeneity}
\end{figure}

We provide the pseudo code for \parname in Alg.~\ref{alg:domain-partition} and an illustration in \autoref{fig:heterogeneity}. In \autoref{fig:heterogeneity}, (a) represents a multi-domain dataset with $4$ training domains. Each color represents a domain and the height of the bar represents the number of samples in this domain. (b) shows the complete heterogeneity with two scenarios $C<D$ and $C>D$. (c) is a detailed partition result with different interpolating with the $C<D$ case.
It can be seen that in the complete heterogeneity case, the partition satisfies constraint $1, 2,$ further, the sample size between clients are roughly even. For other cases,    True Partition (Constraint) is satisfied and also the sample size between clients are roughly even.

\bigskip
\subsection{Other Partition Methods}\label{Other Methods}
\begin{wrapfigure}{R}{0.5\textwidth}
\begin{minipage}{0.5\textwidth}
\vspace{-1.2em}
\begin{algorithm}[H]
\caption{Domain Partition Algorithm}
\label{alg:domain-partition}
\textbf{Input} Number of clients $C$, heterogeneity parameter $\lambda \in [0,1]$, and samples from each domain $(\mathcal{S}_1, \cdots, \mathcal{S}_D)$, where $n_1,\dots, n_D$ denote the number of samples per domain  and w.l.o.g. are assumed to be in descending order, i.e., $n_1 \geq n_2 \geq \dots \geq n_D$.
\begin{algorithmic} 
\STATE \emph{// Divide domain indices across clients}
\IF{$C \leq D$}
\STATE $\forall c, \mathcal{D}_c \gets \emptyset$
\FOR{$d = 1,2, \dots, D$}
    \STATE \emph{// Find client with fewest samples}
    \STATE $c^* \in {\textstyle\argmin\limits_{c\in[C]} \, \sum_{d' \in \mathcal{D}_c}} n_{d'}$ 
    \STATE $\mathcal{D}_{c^*} \gets \mathcal{D}_{c^*} \cup \{d\}$
\ENDFOR
\ELSIF{$C > D$} 
\STATE $\forall c \in \{1,2,\dots,D\},\, \mathcal{D}_c \gets \{c\}$
\FOR{$c = D+1,\dots, C$}
    \STATE \emph{// Find on average largest domain to partition}
    \STATE $d^* \in \argmax\limits_{d\in[D]} \frac{n_{d}}{{\textstyle \sum_{c'=1}^{C}}\mathbbm{1}[d\in \mathcal{D}_{c'}]}$ 
    \STATE $\mathcal{D}_c \gets \{d^*\}$
\ENDFOR
\ENDIF
\STATE \emph{// Partition samples across the clients}
\STATE $\forall c, \mathcal{S}_c \gets \emptyset$
\FOR{$c \in [C], d \in [D]$}
    \STATE $n_{d,c}(\lambda) = 
        \lambda \frac{n_d}{C} + (1-\lambda)  \frac{\mathbbm{1}[d \in \mathcal{D}_c]\cdot n_d}{\sum_{c'=1}^C \mathbbm{1}[d \in \mathcal{D}_c]}$
    \STATE $\mathcal{S}' \gets \texttt{SampleWOReplacement}(\mathcal{S}_d, n_{d,c})$
    \STATE $\mathcal{S}_d \gets \mathcal{S}_d \setminus S'$ \quad \emph{// Remove from domain}
    \STATE $\mathcal{S}_c \gets \mathcal{S}_c \cup S'$ \quad \emph{// Append to client dataset}
\ENDFOR
\end{algorithmic}
\textbf{Output} $(\mathcal{S}_1, \mathcal{S}_2, \cdots, \mathcal{S}_C)$
\end{algorithm}
\vspace{-2.6em}
\end{minipage}
\end{wrapfigure}
\noindent{\bf Shards.} 
By adjusting the constant from $2$ to other value, it's possible to regulate the heterogeneity to a certain extent. However, if the dataset contains imbalance number of samples per domain, shards method could not achieve the complete heterogeneity. For instance, if a dataset contains $2$ domains with $20,30$ data samples respectively. Shard partitioning will not be able to assign each client with data from only one domain.

\noindent{\bf Dirichlet Partitioning.} It draws $q_c\sim \text{Dir}(\alpha p),$ where $p$ characterizes a prior domain distribution over $D$ domains for each client $c\in[C].$ $\alpha>0$ is a concentration parameter controlling the heterogeneity among clients. As $\alpha\rightarrow 0$, each client predominantly holds samples of a single label. Conversely, as $\alpha\rightarrow\infty$, the distribution of labels for each client becomes identical. This method is not a desirable partitioning in the following sense. {\bf 1)} If sampling with replacement, then the datasets across clients may not be disjoint, and some data samples may not be allocated to any client, therefore breaking properties $3$ and $4.$ {\bf 2)} If sampling without replacement, the method is inapplicable when the available data from one domain doesn't meet the collective demand for this domain's data from all clients, breaking property $2.$

\noindent{\bf Semantic Partitioning.} This method aims to partition dataset based on the feature similarity without domain labels. It first processes the data using a powerful pretrained model. The outputs before the fully connected layer are later used as features. Then, thses features are fit a Gaussian Mixture Model to form $K$ for each clusters for each class $Y$. These $KY$ clusters are then merged iteratively using an optimal bipartite for random label pairs across different class. This partition method is more heuristic and highly rely on the quality of the pretrained model. Furthermore, since there is no hard code label, and the partition is based on KL divergence, this method could not give a complete heterogeneity which violate property $1$. Despite that this method could not guarantee property $1$, it is a good partition method to create ``domain"-heterogeneity without having domain labels.
\begin{example}
    We provide a toy example of partition a dataset containing $5$ training domains with $10, 20, 30, 40, 100$ data samples respectively. We illustrate how shards and Dirichlet methods work and highlight that disjointedness flexibility and controllability are violated when our goal is to partition the data into $2$ clients. Shards partitioning might allocation $[10,20,20,0,50]$ samples to the first client, and $[0,0,10,40,50]$ samples to the second client. This lead to the share of domain $3$ and $5$ between the two clients showing the violation of Complete Heterogeneity (Constraint). The Dirichlet partitioning will sample a distribution from a Dirichlet distribution with $\alpha\approx0$. However, no matter what distribution are sampled, at least $3$ domains will be neglected, leading to the violation of  True Partition (Constraint). In contrast, our \parname will allocate $[10,20,30,40,0]$ to the first client and $[0,0,0,0,100]$ to the second client, effectively maintaining Complete Heterogeneity (Constraint), True Partition (Constraint), in addition, the sample size difference is $0$ in this case.
\end{example}

\section{Current Methods in solving DG}\label{sec:approaches}
\paragraph{Centralized DG methods}
Most work solving in DG lies in the centralize regime. A predominant and effective centralized DG approach is through representation learning. \citet{irm} tries to learn domain-invariant feature by aligning the conditional distribution of $p(Y|X)$ among different domains. \citet{coral}, and \citet{mmd_dg} tries to explicitly align the first order and second order momentum in the feature space. There are also methods trying to promote the out-of-distribution generalization by posting constraint on the gradient information among different domains, where \citet{fish} tries to align the gradient among different domains, and \citet{rame2022fishr} enforces domain invariance in the space of the gradients of the loss. Other approaches include distributionally robust optimization \citep{groupdro}, where this method learns the worst-case distribution scenario of training domains. Furthermore, \citet{mixup} is not specifically design for solving domain generalization, but found to be effective as well when we extrapolate the data from different domains in the training dataset. \citet{xu2020adversarial} introduces domain-mixup which extrapolates examples and features across training domains.

\paragraph{Federated DG methods}
Limited research has focused explicitly on solving the Federated DG by design. FedDG \citet{feddg} introduced a specific FL paradigm for medical image classification, which involves sharing the amplitude spectrum of images among local clients, violating the privacy protocol. Another approach, FedADG \citep{fedadg}, utilizes a generative adversarial network (GAN) framework in FL, where each client contains four models: a featurizer, a classifier, a generator, and a discriminator. FedADG first trains the featurizer and classifier using empirical loss and then trains the generator and discriminator using the GAN approach. However, this method requires training four models simultaneously and tuning numerous hyperparameters, making convergence challenging. A novel aggregation method called Federated Gradient Masking Averaging (FedGMA) \citep{tenison2022gradient} aims to enhance generalization across clients and the global model. FedGMA prioritizes gradient components aligned with the dominant direction across clients while assigning less importance to inconsistent components. FedSR \citep{nguyenfedsr} proposes a simple algorithm that uses two locally-computable regularizers for domain generalization.
Given the limited literature on solving domain generalization (DG) in the federated learning (FL) setting, we selected all the aforementioned algorithms.

\begin{wrapfigure}{R}{0.5\textwidth}
\vspace{-2.4em}
\begin{minipage}{0.5\textwidth}
\begin{algorithm}[H]
\caption{Modified AFL}
\label{alg:new_afl}
\textbf{Initialize} $\theta$, $\beta$ 
\begin{algorithmic}
\STATE{Transmit $\theta$ to all clients $c\in[C].$}
\FOR{$t=1,2,\dots,T$}
\STATE{// --------\emph{Update $\theta$}--------}
  \STATE Transmit $\beta$ to all clients $c.$
  \FOR{$c=1,2,\dots,C$}
    \STATE Update $\theta_c$ using SGD with multiple iteration.
    \STATE Submit $\theta_c$ to the server.
  \ENDFOR
  \STATE $\theta\leftarrow\sum_c\frac{1}{n_c}\theta_c.$\emph{\quad\quad // Weighted average.}
  \STATE{// --------\emph{Update $\beta$}--------}
  \STATE Transmit $\theta$ to all clients $c$
  \FOR{$c=1,2,\dots,C$}
    \STATE Calculate $\ell_{d,c}=\nabla_{\beta_d}\mathcal{L}_{c}$ 
    \STATE Submit updated $\ell_{d,c}$.
  \ENDFOR
  \FOR{$d=1,2,\dots,D$}
    \STATE $\beta_d\leftarrow\text{Proj}(\beta_d+\gamma_\beta\sum_c\ell_{d,c}).$
  \ENDFOR
\ENDFOR
\end{algorithmic}
\textbf{Output} $\theta$
\end{algorithm}
\end{minipage}
\vspace{-1em}
\end{wrapfigure}
\paragraph{FL methods tackling client heterogeneity}
Another line of research in FL aims to guarantee convergence even under client heterogeneity, but these FL-based methods still assume the train and test datasets do not shift (i.e., they do not explicitly tackle train-test heterogeneity of the domain generalization task). We include this line of work because methods considering converging over a heterogeneous dataset might bring better DG ability implicitly.
The empirical observation of the statistical challenge in federated learning when local data is non-IID was first made by \citet{zhao2018federated}. Several subsequent works have analyzed client heterogeneity by assuming bounded gradients \citep{yu2019parallel,basu2019qsparse,wang2019adaptive,li2019convergence} or bounded gradient dissimilarity \citep{li2020federated}, and additionally assuming bounded Hessian dissimilarity \citep{karimireddy2020scaffold,khaled2020tighter,liang2019variance}.
From this category, we selected FedProx \citep{li2020federated}, which addresses statistical heterogeneity by adding a proximal term to the local subproblem, constraining local updates to be closer to the initial global model. Scaffold \citep{karimireddy2020scaffold} utilizes variance reduction to account for client heterogeneity. 

\paragraph{FL methods tackling fairness}
Agnostic Federated Learning (AFL) \citep{mohri2019agnostic,du2021fairness} shares similarities with Domain Generalization in a Federated context. This is evident as both approaches address scenarios where the test distribution diverges from the training distribution. AFL constructs a framework that naturally yields a notion of fairness, where the centralized model is optimized for any target distribution formed by a mixture of the client distributions. Thus, AFL is a good method to evaluate especially when tackling subpopulation shift tasks. AFL introduces a minimax objective which is identical to the GroupDRO \citep{groupdro},
\begin{equation}\label{eqn:afl_objective}
    \min_\theta\max_{\beta\in\Delta_D} \mathcal{L}(\theta, \beta)=\sum_{d=1}^{D}\beta_d\ell_d(\theta),
\end{equation}
where $\theta$ denotes the model parameter and $\beta$ is a weight vector taking values in the simplex $\Delta_D$ of dimension $D.$ AFL introduces an algorithm applying projected gradient ascent on $\beta$ and gradient descent on $\theta$. It is designed for $C=D,$ where they assumes each client as a domain. Further, it requires communication per iteration. We made the following two modifications to accommodate the general $C\neq D$ cases and expensive communication cost:
\begin{enumerate}
    \item To allow $C\neq D,$ we construct new objective as the following:
\begin{equation}\label{eqn:afl_fl_objective}
    \min_\theta\max_{\beta\in\Delta_D} \mathcal{L}(\theta, \beta) = \sum_{c=1}^{C}\mathcal{L}_c(\theta, \beta)=\sum_{c=1}^{C}\sum_{d=1}^{D}\beta_d\ell_{d,c}(\theta).
\end{equation}
    \item To reduce communication cost, we allow $\theta$ to be updated multiple iterations locally per communication. Further, we maintain the update of $\beta$ on the central server, as in AFL, given that it requires projection of the global updates onto the simplex. This projection is not equivalent to averaging the locally projected updates of  $\beta.$ 
\end{enumerate}
Modified AFL avoids the frequent communication compared to AFL.

\section{Datasets and Difficulty Metric}

\subsection{Dataset Introduction}\label{Dataset Introduction}
In this section, we introduce the datasets we used in our experiments, and the partition method we used to build heterogeneous datasets in the training and testing phase as well as the heterogeneous local training datasets among clients in the FL.
\paragraph{FEMNIST} It is an FL prototyping image dataset of handwritten digits and characters each users created as a natural domains, widely used for evaluation for client heterogeneity in FL.
Though it contain many training domains, it lacks significant distribution shifts across domains ($R_\text{DG}= 1$), and considered as easy compared to other datasets.
\paragraph{PACS} It is an image dataset for domain generalization. It consists of four domains, namely Photo (1,670 images), Art Painting (2,048 images), Cartoon (2,344 images), and Sketch (3,929 images). This task requires learning the classification task on a set of objects by learning on totally different renditions. $R_\text{DG}=0.960$ makes it a easy dataset as domain generalization in our setting. Notice that choosing different domain as test domain might give us different $R_\text{DG}$.
\paragraph{IWildCam} It is a real-world image classification dataset based on wild animal camera traps around the world, where each camera represents a domain. It contains 243 training domains, 32 validation and 48 test domains. Usually people cares about rare speicies, thus we utilize the macro F1 score as the evaluation metric instead of standard accuracy, as recommended in the original dataset's reference \citep{wilds}. The $R_\text{DG}=0.449$ makes it a very challenging dataset for domain generalization.
\paragraph{CelebA} CelebA (Celebrity Attribute) \citep{celeba} is a large-scale face attributes dataset. It's one of the most popular datasets used in the facial recognition and facial attribute recognition domains. We use a variant of the original CelebA dataset from \citep{groupdro} using hair color as the classification target and gender as domain labels. This forms a subpopulation shift task which is a special kind of domain generalization. 

\paragraph{Camelyon17} It is a medical dataset consisting of a set of whole-slide images from multiple institutions (as domain label) to detect breast cancer. It consist $3$ training domains, $1$ validation domain and $1$ test domain. We partition the data following wilds benchmark \citep{wilds}. 

\paragraph{CivilComments} It is a real-world binary classification text-based dataset formed from the comments from different demographic groups of people, containing $8$ demographic group. The goal is to judge whether the text is malicious or not, which is also a subpopulation shift dataset.
\paragraph{Py150} It is a real-world code-completion dataset which is challenging given massive domains $8421,$ where the domain is formed according to the repository author. The goal is to predict the next token given the context of previous tokens. We evaluate models by the accuracy on the class and method tokens.
\begin{table}[htbp]
\centering
\caption{Summary of selected datasets as well as their difficulty metric, where $n$ is the number of samples, $D$ is the number of domains, $C$ is the number of clients used.}\label{tab:summary}
\resizebox{0.7\textwidth}{!}{%
\begin{tabular}{@{}llllllll@{}}
\toprule
Dataset & \multicolumn{3}{c}{Statistics} & \multicolumn{4}{c}{Difficulty Metric}  \\  \cmidrule(l){2-4} \cmidrule(l){5-8}   
 & $n$ & $D$ & $C$ & $R_\text{DG}$ & \multicolumn{3}{c}{$R_\text{FL}(\lambda)$} \\ \cmidrule(l){6-8}
 &  &  &  &  & $\lambda=1$ & $\lambda=0.1$ & $\lambda=0$ \\ 
 \midrule
FEMNIST & $737036$ & $2586$ & $100$ & $1.000$ & $0.980$ & $0.981$ & $0.981$\\ 
PACS & $9991$ & $2$ & $100$ & $0.960$ & $1.000$ & $1.000$ & $1.000$\\
IWildCam & $203029$ & $243$ & $100$ & $0.449$ & $0.869$ & $0.714$ & $0.571$\\
CelebA & $162770$ & $4$ & $100$ & $0.661$ & $0.760$ & $0.805$ & $0.797$ \\
Camelyon17 & $410359$ & $3$ & $100$ & $0.969$ & $1.000$ & $1.000$ & $1.000$\\
CivilComments & $448000$ & $16$ & $100$ & $0.984$ & $0.618$ & $0.533$ & $0.532$\\
Py150 & $150000$ & $8421$ & $100$ & $0.969$ & $0.999$ & $0.998$ & $0.998$\\ \bottomrule
\end{tabular}}
\end{table}

\subsection{Dataset partition Setup}
For each dataset, we first partition the dataset into $5$ categories, namely training dataset, in-domain validation dataset, in-domain test dataset, held-out validation dataset and test domain dataset. For FEMNIST and PACS dataset, we use Cartoon and Sketch as training domain, Art-painting as held-out domain, Painting as test domain. For training domain, we partition $10\%,10\%$ of the total training domain datasets as in-domain validation dataset and in-domain test dataset respectively. For IWildCam, CivilComments and Py150, we directly apply the Wilds official partitions.

\section{Benchmark Experimental Setting}\label{sec:exp_setting}
\subsection{Model Structure}
In this benchmark, for image-based datasets, we use ResNet-50 \cite{resnet} dataset. For CivilComments and Py150 dataset, we choose DistilBERT \cite{sanh2020distilbert} and CodeGPT \cite{radford2019language} respectively as recommended by Wilds \citep{wilds}.

\subsection{Model Selection}
We conduct held-out domain model selection with $4$ runs for each methods. The oracle model selection evaluates the model based on the performance on the held-out validation domain. The results are reported based on the best run.

\subsection{Early Stopping}
We conduct early stopping using the held-out validation dataset in our evaluation. For each dataset and method, We first run certain communication rounds, then we select the model parameters which achieves the best performance on the validation set. We report the held-out validation dataset in the main paper, and we report the results using the in-domain validation set in \autoref{sec:supplementary_results}.

\subsection{Hyperparameters}\label{sec:hyperparameter}
In this section, we present the hyperparameters selected for the evaluation. We grid search 8 times of run starting with learning rate same as ERM and other hyperparameters from the methods' original parameters. We than select the hyperparameter based on the best performance on the held-out domain validation set. Please refer to the \autoref{tab:hyperparameter} to review all hyperparameters.
\begin{table}[htbp]
\caption{Hyperparameters used in the evaluation. }\label{tab:hyperparameter}
\resizebox{\textwidth}{!}{%
\begin{tabular}{@{}lllllllll@{}}
\toprule
 \multicolumn{2}{l}{Dataset Name}  & PACS & CelebA & Camelyon17 & FEMNIST & IWildCam & CivilComments & Py150 \\ 
\multicolumn{2}{l}{Total Communication} & $80$ & $20$ & $20$ & $40$ & $50$  & $10$ & $10$ \\ \midrule
ERM & lr & $3\times10^{-5}$ & $3\times10^{-3}$ & $1\times10^{-3}$ & $1\times10^{-3}$ & $3\times10^{-5}$ & $1\times10^{-5}$ & $8\times10^{-5}$ \\ \midrule
\multirow{2}{*}{IRM} & lr & $3\times10^{-5}$ & $1\times10^{-3}$ & $1\times10^{-3}$ &$1\times10^{-3}$ & $3\times10^{-5}$ & $1\times10^{-6}$ & $1\times10^{-6}$ \\
 & penalty weight & $1000$ & $1000$ & $1000$ & $100$ & $100$ & $10$ & $1000$\\ \midrule
\multirow{2}{*}{Fish} & lr & $3\times10^{-5}$ & $1\times10^{-3}$& $1\times10^{-4}$ & $1\times10^{-3}$ & $1\times10^{-5}$ & $1\times10^{-6}$ & $1\times10^{-6}$ \\
 & meta-lr & $0.1$ & $0.1$& $0.1$ & $0.1$ & $0.1$ & $0.1$ & $1$ \\ \midrule
\multirow{2}{*}{Mixup} & lr & $3\times10^{-5}$ & $3\times10^{-4}$ & $1\times10^{-3}$ & $1\times10^{-3}$ & $3\times10^{-5}$ & - & - \\
 & $\alpha$ & $0.2$ & $0.2$ & $0.1$ & $0.2$ & $0.2$ & - & - \\ \midrule
\multirow{2}{*}{MMD} & lr & $3\times10^{-5}$ &$ 0.0001$ & $1\times10^{-3}$ & $1\times10^{-5}$ &  $3\times10^{-5}$ & $1\times10^{-6}$ & $1\times10^{-5}$ \\
 & penalty weight & $1$ & $1$ & $0.1$ & $1$ & $1$ & $10$ & $1$ \\ \midrule
\multirow{2}{*}{Coral} & lr & $3\times10^{-5}$ & $1\times10^{-4}$ & $1\times10^{-3}$ &$1\times10^{-5}$ & $3\times10^{-5}$& $1\times10^{-6}$ & $5\times10^{-5}$ \\
 & penalty weight & $1$ & $1$ & $10$ & $10$ & $1$ & $1$ & $1$ \\ \midrule
\multirow{2}{*}{GroupDRO} & lr & $3\times10^{-5}$ & $3\times10^{-4}$ & $1\times10^{-3}$ & $1\times10^{-5}$ &  $1\times10^{-5}$ & $1\times10^{-5}$ & $1\times10^{-5}$ \\
 & group lr & $0.01$ & $0.05$ & $0.01$ & $1\times10^{-5}$ &  $1\times10^{-5}$ & $0.05$ & $0.005$ \\ \midrule
\multirow{3}{*}{FedProx} & lr & $3\times10^{-5}$ & $1\times10^{-3}$ & $5\times10^{-4}$ & $1\times10^{-5}$ &  $3\times10^{-5}$ & $1\times10^{-5}$ & $8\times10^{-6}$ \\
 & $\mu$ & $1\times10^{-3}$ & $0.1$ & $1\times10^{-3}$ & $1\times10^{-3}$ &  $0.1$ & $0.01$ & $1\times10^{-3}$ \\ \midrule
Scaffold & lr & $3\times10^{-5}$ & $1\times10^{-4}$ & $1\times10^{-3}$ & $1\times10^{-5}$ &  $1\times10^{-5}$ & $3\times10^{-6}$ & $8\times10^{-6}$ \\ \midrule
AFL & lr & $3\times10^{-5}$ & $1\times10^{-4}$ & $1\times10^{-3}$ & $1\times10^{-5}$ &  $1\times10^{-5}$ & $3\times10^{-6}$ & $8\times10^{-6}$ \\ \midrule
\multirow{2}{*}{FedDG} & lr & $3\times10^{-5}$ & $1\times10^{-4}$ & $1\times10^{-4}$ & $1\times10^{-4}$ & $1\times10^{-4}$ & - & - \\
 & $\lambda$ & $U(0,1)$ & $U(0,1)$ & $U(0,1)$ & $U(0,1)$ & $U(0,1)$ & - & - \\ \midrule
\multirow{4}{*}{FedADG} & Classifier lr & $2\times10^{-4}$ & $2\times10^{-4}$ & $2\times10^{-4}$ & $2\times10^{-4}$ & $2\times10^{-4}$ & - & - \\
 & Gen lr & $7\times10^{-4}$ & $5\times10^{-4}$ & $7\times10^{-4}$ & $1\times10^{-3}$ & $1\times10^{-5}$ & - & - \\
 & Disc lr & $7\times10^{-4}$ & $5\times10^{-4}$ & $7\times10^{-4}$ & $1\times10^{-3}$ & $1\times10^{-5}$ & - & -  \\
 & $\alpha$ & $0.15$ & $0.25$ & $0.1$ & $0.05$ & $0.2$ & - & - \\ \midrule
\multirow{3}{*}{FedSR} & lr & $1\times10^{-5}$ & $5\times10^{-4}$ & $1\times10^{-4}$ & $5\times10^{-4}$ & $3\times10^{-5}$ & $5\times10^{-6}$ & $5\times10^{-6}$ \\
 & l2 regularizer & $0.01$ & $0.01$ & $0.01$ & $0.01$ & $0.01$ & $0.01$ & $0.01$ \\
 & cmi regularizer & $0.001$ & $0.001$ & $0.001$ & $0.001$ & $0.001$ & $0.001$ & $0.001$ \\ \midrule
\multirow{2}{*}{FedGMA} & lr & $3\times10^{-5}$ & $1\times10^{-4}$ & $5\times10^{-4}$ & $1\times10^{-3}$ & $1\times10^{-3}$ & $5\times10^{-6}$ & $1\times10^{-5}$ \\
 & mask-threshold & $0.1$ & $0.1$ & $0.4$ & $0.4$ & $0.4$ & $0.4$ & $0.4$ \\ \bottomrule
\end{tabular}}
\end{table}

\section{Additional FL-specific challenges for domain generalization}\label{Additional FL-specific challenges for domain generalization}
As mentioned in \autoref{sec:main_result}, we also include some deeper exploration over the effect of number of clients and communication frequency, which are unique to the FL regime.

\noindent{\bf i) Massive number of clients:}
In this experiment, we explore the performance of different algorithms when the number of clients $C$ increases on PACS. {We fix the communication rounds $50$ and the local number of epoch is $1$ (synchronizing the models every epoch).} {\autoref{fig:num_client} plots the held-out DG test accuracy versus number of clients for different levels of data heterogeneity. The following comments are in order: given communication budget, 1)  current domain generalization methods all degrade a lot after $C\geq 10$, while the performance ERM and FedDG maintain relatively unchanged as the clients number increases given communication budget. FedADG and FedSR are are sensitive to the clients number, and they both fail after $C\geq 20.$} 2) Even in the simplest homogeneous setting $\lambda=1,$ where each local client has i.i.d training data, current domain generalization methods IRM, FISH, Mixup, MMD, Coral, GroupDRO work poorly in the existence of large clients number, this means new methods are needed for DG in FL context when data are stored among massive number of clients.
\begin{figure}
\centering
\includegraphics[width=\textwidth]{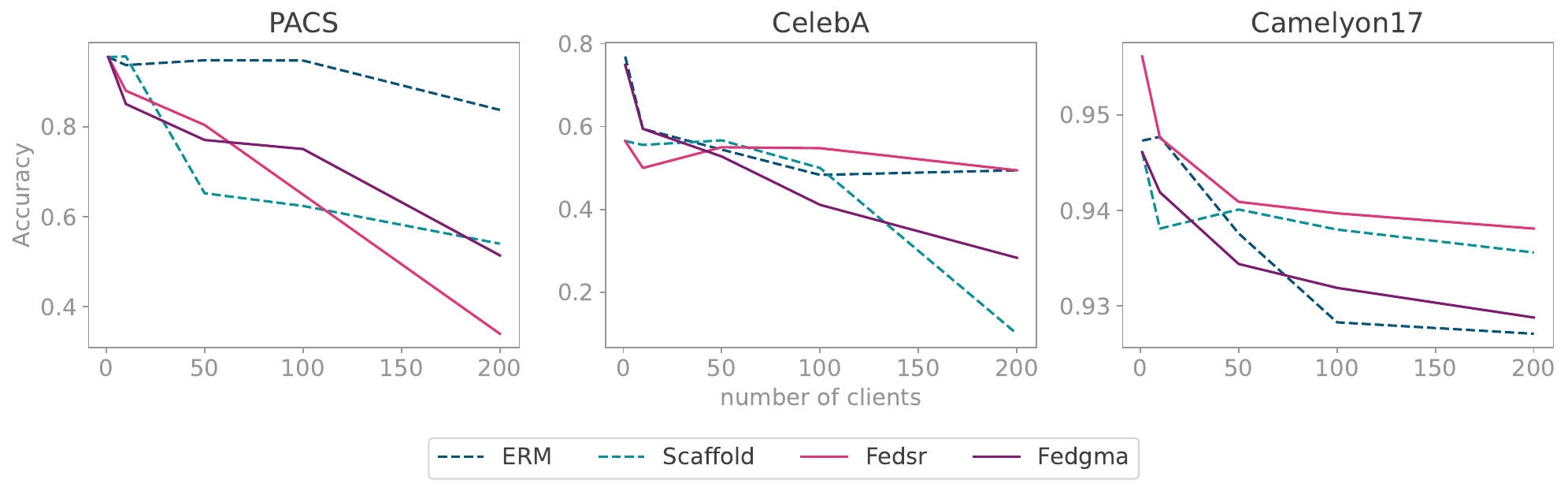}\caption{Performance based on accuracy versus the number of clients across the PACS, CelebA, and Camelyon17 datasets.}\label{fig:num_client}
\end{figure}
\begin{figure}[htbp]
\centering
\includegraphics[width=\textwidth]{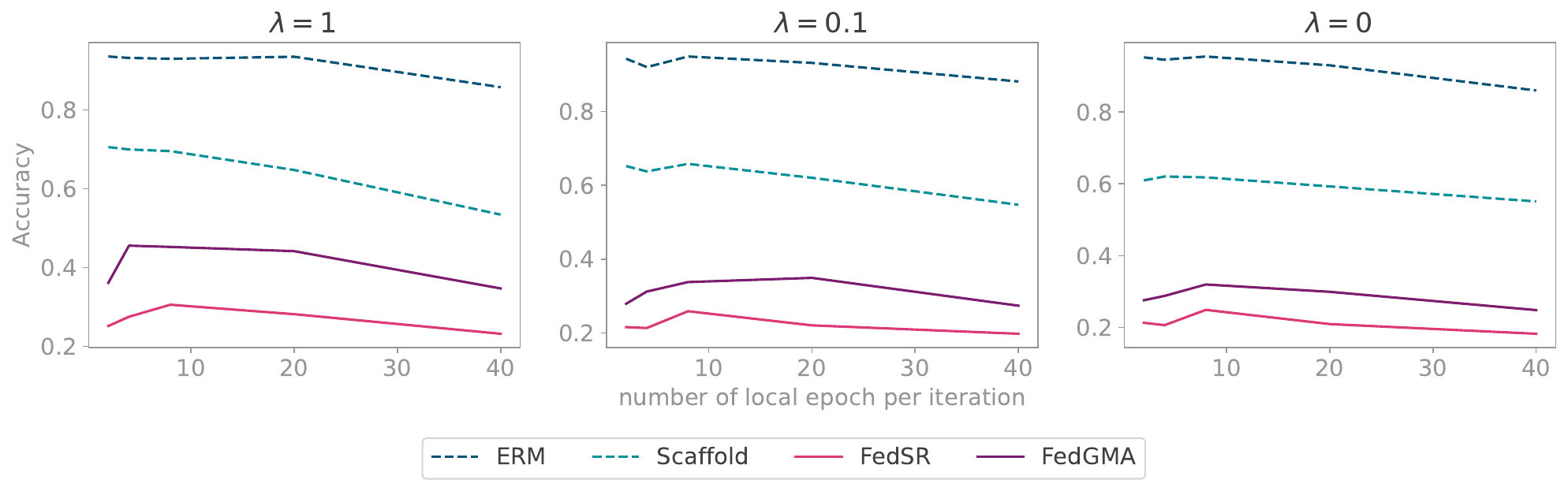}
\caption{PACS: Held-out DG test accuracy vs. varying communications (resp. varying echoes ).}\label{fig:num_communication}
\end{figure}
\begin{figure}[htbp]
\centering
    \includegraphics[width=\linewidth]{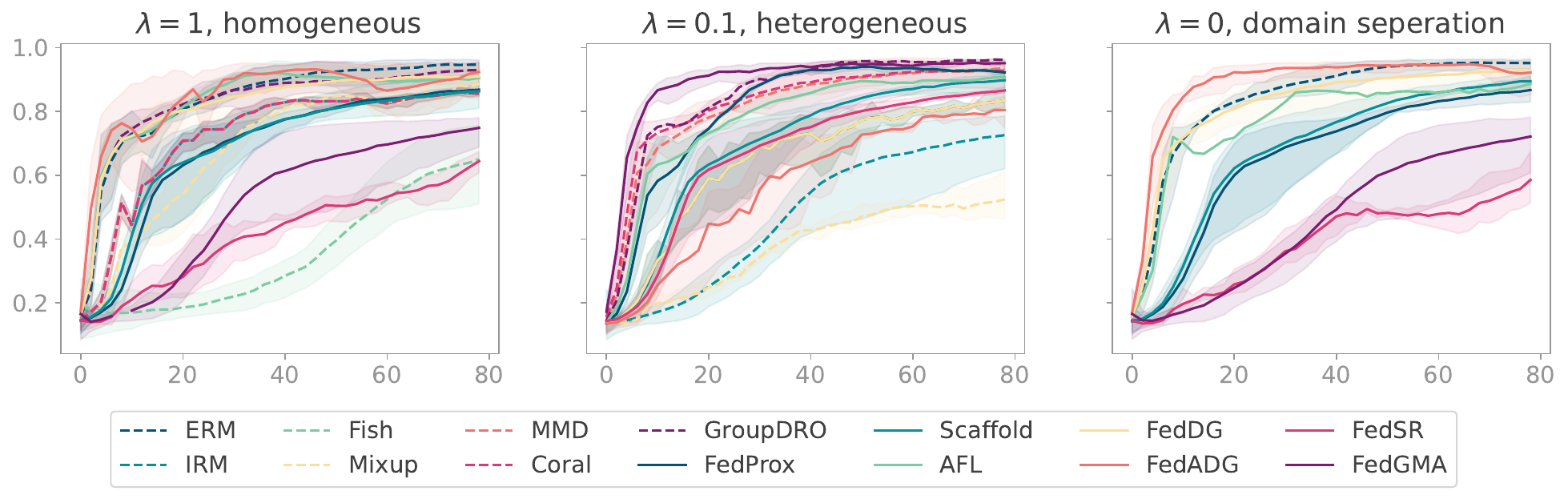}
    \caption{Convergence curve on PACS; total clients and training domains $(C,D)=(100, 2);$ increasing domain heterogeneity from left to right: $\lambda=(1,0.1,0).$ }\label{fig:converge_pacs} 
\end{figure}
\begin{figure}[htbp]
\centering
\includegraphics[width=\linewidth]{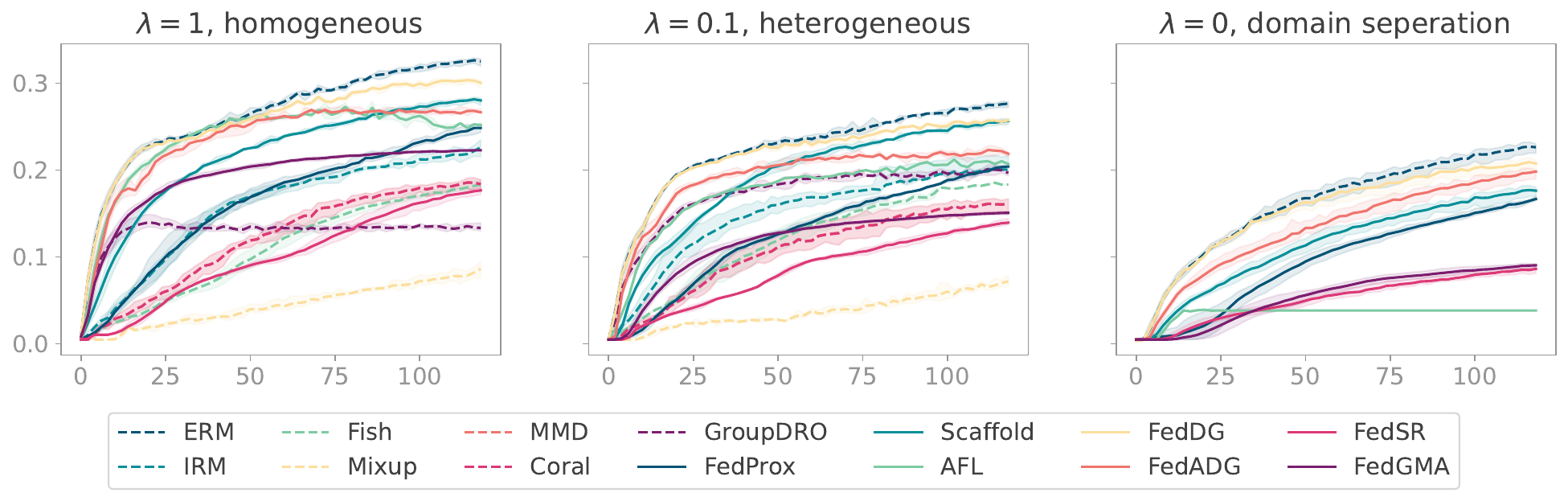}
    \caption{Accuracy versus communication rounds for IWildCam; Total clients number $C=243$; increasing heterogeneity from left to right panel: $\lambda=(1,0.1,0).$ }\label{fig:converge_iwildcam}
\end{figure}
\noindent{\bf ii) Communication constraint:}
To show the effect of communication rounds on convergence, we plot the test accuracy versus communication rounds in Appendix \autoref{fig:num_communication}. We fix the number of clients $C=100$ on PACS and decreases rounds of communication (together with increasing local epochs).
That is, if the regime restricts the communication budget, then we increase its local computation $E$ to have the same the total computations. Therefore, the influence of communication on the performance is fair  between algorithms  because the total data pass is fixed.  We observe that the total number of communications does not monotonically affect the DG performance. With decreasing number of total communication, most methods' performance first increase then decrease. This might be an interesting implicit regularization phenomena in the FL context. Without discussing DG task, usually frequent communications lead to faster convergence. The relationship between DG performance and communications requires further exploration.

\section{Supplementary Results}\label{sec:supplementary_results}
In the main paper, we provide experiments results using held-out validation early stopping. Here we report the results using in-domain validation set for reference. We also report the convergence curve for each methods on each dataset for reference. We observe that under most of cases, the held-out validation gives us a better model. Thus, we recommend using held-out validation set to perform early stopping when we can access multiple training domains. However, if the training dataset contains only 2-3 domains, we should consider using in-domain validation set.
\paragraph{More results on PACS and IWildCam dataset.}

\paragraph{Results on FEMNIST dataset.}
As mentioned in the main paper, we include the FEMNIST dataset here for reference. It is evident from \autoref{tab:femnist} that $\lambda$ does not significantly influence the final domain generalization (DG) accuracy, and whether using in-domain validation or held-out-domain validation does not impact the final DG accuracy as well. This suggests a lack of statistical heterogeneity across different domains. Furthermore, we observe that changing $\lambda$ does not significantly affect the convergence. Most of the results do not converge to the performance level of the centralized counterpart. This discrepancy arises from the challenge posed by the large number of clients, where $R_\text{FL} = 0.980 < 1$.
\begin{figure}[htbp]
\centering
\includegraphics[width=\textwidth]{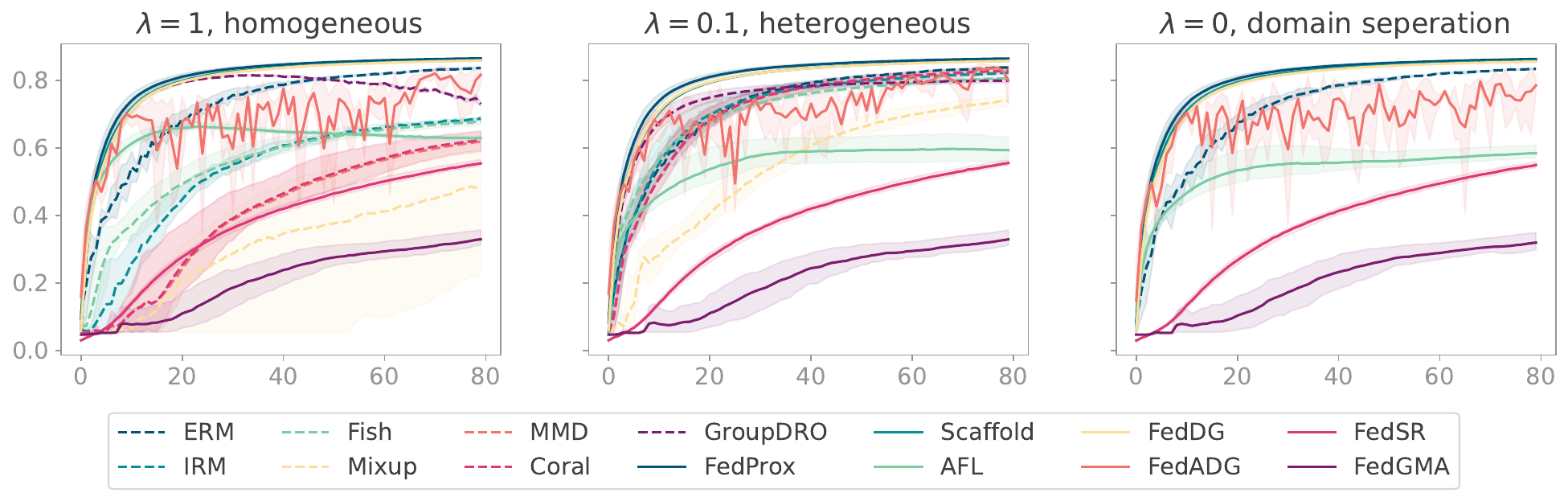}
\caption{Accuracy versus communication rounds for FEMNIST; total clients and training domains $(K,M)=(100, 2586);$ increasing domain heterogeneity from left to right panel: $\lambda=(1,0.1,0).$ }\label{fig:converge_femnist}
\end{figure}

\begin{figure}[htbp]
\centering
\includegraphics[width=\textwidth]{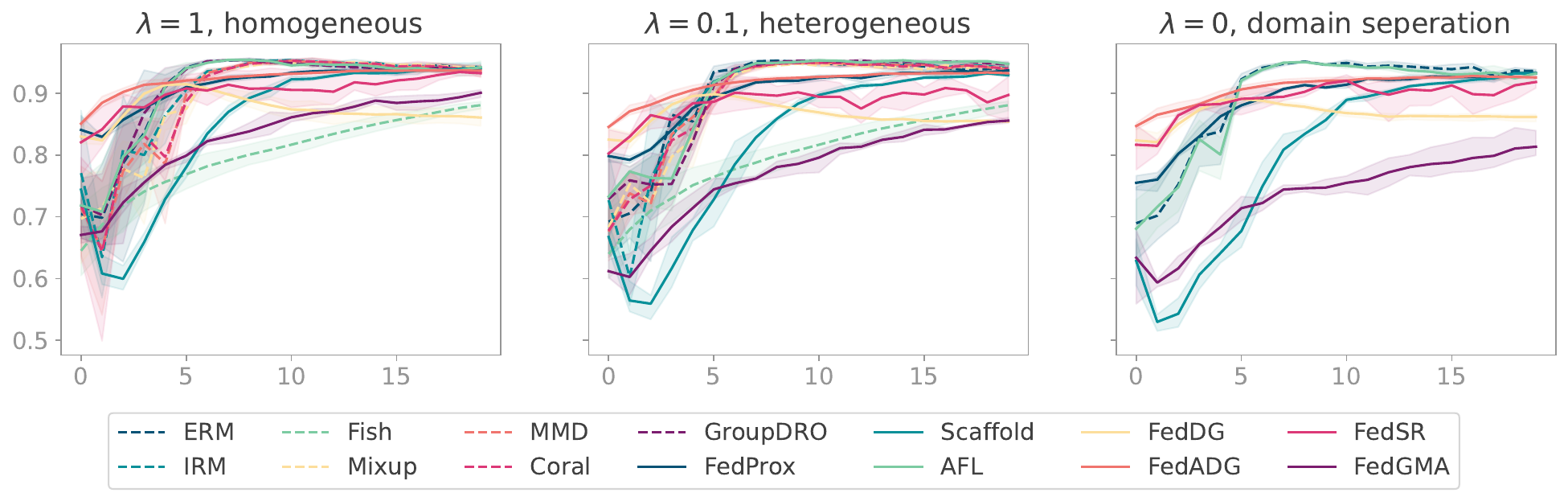}
\caption{Accuracy versus communication rounds for Camelyon17; total clients and training domains $(K,M)=(100, 4);$ increasing domain heterogeneity from left to right panel: $\lambda=(1,0.1,0).$ }\label{fig:converge_camelyon17}
\end{figure}

\begin{figure}[htbp]
\centering
\includegraphics[width=\textwidth]{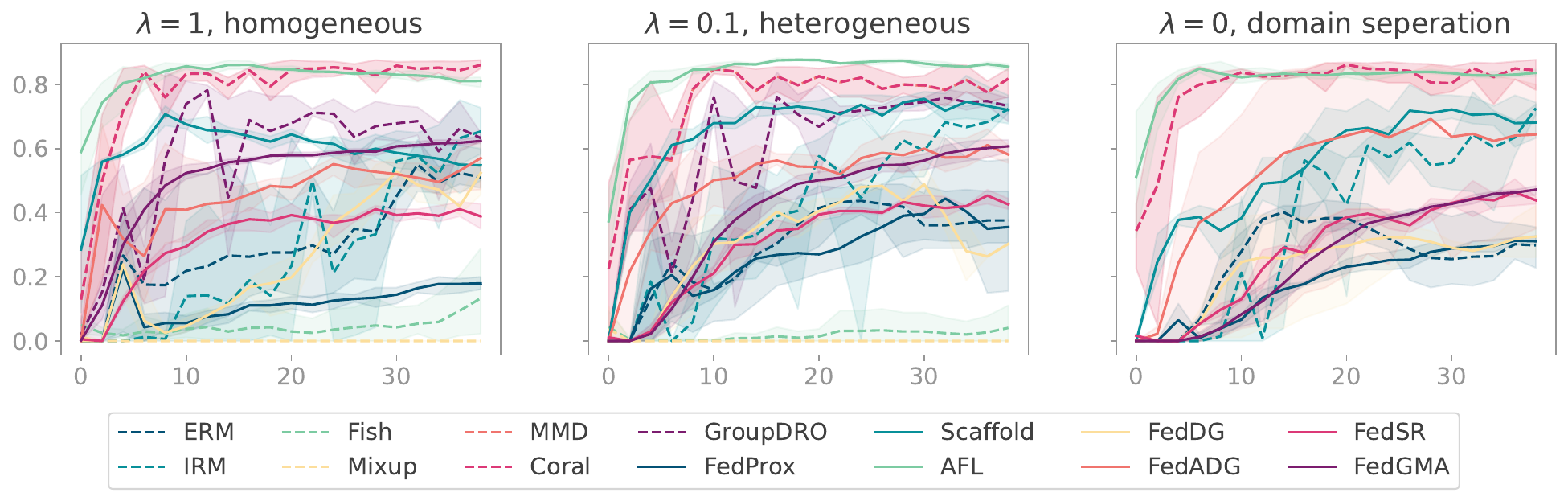}
\caption{Accuracy versus communication rounds for CelebA; total clients and training domains $(K,M)=(100, 2);$ increasing domain heterogeneity from left to right panel: $\lambda=(1,0.1,0).$ }\label{fig:converge_celeba}
\end{figure}
\begin{table}[!tb]
\caption{Test accuracy on PACS and IWildCam dataset with held-out-domain validation where FedAvg-ERM is the simple baseline (B). ``-'' means the method is not applicable in that context. Bold is for best and italics is for second best in each column. We report the standard deviation among 3 runs.} \label{tab:img_main_app}
\resizebox{\columnwidth}{!}{%
\begin{tabular}{@{}llllllllll@{}}
\toprule
 &  & \multicolumn{4}{c}{PACS ($D=2$)} & \multicolumn{4}{c}{IWildCam ($D=243$)} \\ \cmidrule(l){3-6} \cmidrule(l){7-10}  
 &  & $C=1$ & \multicolumn{3}{c}{$C=100$ (FL)} & $C=1$ & \multicolumn{3}{c}{$C=100$ (FL)} \\ \cmidrule(lr){4-6}  \cmidrule(l){8-10} 
 &  & (centralized) & $\lambda=1$ & $\lambda=0.1$ & $\lambda=0$ & (centralized) & $\lambda=1$ & $\lambda=0.1$ & $\lambda=0$ \\ 
 \midrule
 \rot{B} & FedAvg-ERM & $\bf 0.943\pm0.012$ & $ \bf 0.948\pm0.020$ & $\mathbf{0.955\pm0.016}$ & $\mathbf{0.954\pm0.018}$ & $\mathit{0.343\pm0.003}$ & $\mathbf{0.298\pm0.002}$ & $0.245\pm0.002$ & $\mathbf{0.196\pm0.010}$ \\
 \midrule
 \multirow{6}{*}{\rot{DG Adapted}}& IRM & $0.918\pm0.007$ & $0.920\pm0.041$ & $0.856\pm0.044$ & - & $0.321\pm0.004$ & $0.196\pm0.006$ & $0.179\pm0.005$ & - \\
 & Fish & $\mathit{0.936\pm0.021}$ & $0.653\pm0.127$ & $0.345\pm0.107$ & - & $\mathbf{0.354\pm0.001}$ & $0.183\pm0.004$ & $0.154\pm0.005$ & - \\
 & Mixup & $0.918\pm0.006$ & $0.886\pm0.052$ & $0.826\pm0.030$ & - & $0.331\pm0.007$ & $0.086\pm0.010$ & $0.072\pm0.007$ & - \\
 & MMD & $0.928\pm0.022$ & $0.921\pm0.041$ & $0.855\pm0.044$ & - & $0.324\pm0.002$ & $0.185\pm0.005$ & $0.163\pm0.007$ & - \\
 & DeepCoral & $ 0.928\pm0.007$ & $0.920\pm0.041$ & $0.856\pm0.043$ & - & $0.329\pm0.008$ & $0.185\pm0.005$ & $0.163\pm0.007$ & - \\
 & GroupDRO & $0.928\pm0.007$ & $\mathit{0.944\pm0.025}$ & $0.945\pm0.015$ & - & $0.211\pm0.002$ & $0.134\pm0.005$ & $0.198\pm0.006$ & - \\ 
 \midrule
\multirow{2}{*}{\rot{FL}} & FedProx & - & $0.896\pm0.025$ & $0.891\pm0.025$ & $0.896\pm0.024$ & - & $0.251\pm0.003$ & $0.204\pm0.001$ & $0.167\pm0.003$ \\
 & Scaffold & - & $0.896\pm0.025$ & $0.892\pm0.024$ & $0.896\pm0.024$ & - & $\mathit{0.281\pm0.002}$ & $\mathbf{0.255\pm0.004}$ & $\mathit{0.178\pm0.008}$ \\ 
 & AFL & - & $ 0.931\pm0.026 $ & $ 0.915\pm0.040 $ & $ 0.911\pm0.041 $ & - & $ 0.256\pm0.010$ & $ 0.162\pm0.008$ & $0.034\pm0.001$ \\
 \midrule
\multirow{4}{*}{\rot{FDG}} & FedDG & - & $0.933\pm0.020$ & $\mathit{0.947\pm0.018}$ &$ \mathit{0.943\pm0.023}$ & - & $0.274\pm0.001$ & $0.235\pm0.002$ & $0.167\pm0.004$ \\
 & FedADG & - & $0.943\pm0.011$ & $0.942\pm0.001$ & $0.935\pm0.011$ & - & $0.259\pm0.012$ & $\mathit{0.251\pm0.010}$ & $0.164\pm0.001$ \\
 & FedSR & - & $0.640\pm0.158$ & $0.548\pm0.086$ & $0.537\pm0.092$ & - & $0.177\pm0.005$& $0.141\pm0.005$ & $0.086\pm0.003$ \\
 & FedGMA & - & $0.750\pm0.050$ & $0.730\pm0.087$ & $0.724\pm0.099$ & - & $0.223\pm0.001$ & $0.151\pm0.002$ & $0.091\pm0.003$ \\ \bottomrule
\end{tabular}}
\end{table}
\begin{table}[!ht]
\caption{Test accuracy on CelebA and Camelyon17 dataset with held-out-domain validation where FedAvg-ERM is the simple baseline (B). ``-'' means the method is not applicable in that context. Bold is for best and italics is for second best in each column. We report the standard deviation among 3 runs.} \label{tab:img_main2_app}
\resizebox{\columnwidth}{!}{%
\begin{tabular}{@{}llllllllll@{}}
\toprule
 &  & \multicolumn{4}{c}{CelebA ($D=2$)} & \multicolumn{4}{c}{Camelyon17 ($D=3$)} \\ \cmidrule(l){3-6} \cmidrule(l){7-10}  
 &  & $C=1$ & \multicolumn{3}{c}{$C=100$ (FL)} & $C=1$ & \multicolumn{3}{c}{$C=100$ (FL)} \\ \cmidrule(lr){4-6}  \cmidrule(l){8-10} 
 &  & (centralized) & $\lambda=1$ & $\lambda=0.1$ & $\lambda=0$ & (centralized) & $\lambda=1$ & $\lambda=0.1$ & $\lambda=0$ \\ 
 \midrule
 \rot{B} & FedAvg-ERM & $0.769\pm0.035$ & $0.606\pm0.019$ & $0.517\pm0.039$ & $0.446\pm0.018$ & $0.903\pm0.009$ & $\mathit{0.949\pm0.007}$ & $\mathit{0.950\pm0.004}$ & $\bf 0.948\pm0.004$ \\
 \midrule
 \multirow{6}{*}{\rot{DG Adapted}}& IRM & $\mathbf{0.891\pm0.012}$ & $0.706\pm0.048$ & $0.737\pm0.013$ & - & $0.912\pm0.056$ & $0.948\pm0.001$ & $0.952\pm0.002$ & - \\
 & Fish & $\mathit{0.883\pm0.010}$ & $0.144\pm0.158$ & $0.072\pm0.054$ & - & $0.922\pm0.006$ & $0.881\pm0.006$ & $0.878\pm0.008$ & - \\
 & Mixup & $0.288\pm0.498$ & $0.129\pm0.217$ & $0.126\pm0.218$ & - & $\mathit{0.940\pm0.011}$ & $\mathbf{0.949\pm0.003}$ & $0.947\pm0.006$ & - \\
 & MMD & $0.835\pm0.047$ & $0.809\pm0.027$ & $0.766\pm0.030$ & - & $0.922\pm0.027$ & $0.946\pm0.004$ & $0.947\pm0.006$ & - \\
 & DeepCoral & $0.852\pm0.036$ & $\mathbf{0.813\pm0.017}$ & $\mathbf{0.782\pm0.022}$ & - & $\mathbf{0.940\pm0.007}$ & $0.948\pm0.004$ & $0.949\pm0.003$ & - \\
 & GroupDRO & $0.869\pm0.034$ & $\mathit{0.811\pm0.028}$ & $0.761\pm0.056$ & - & $0.919\pm0.027$ & $0.947\pm0.006$ & $\mathbf{0.953\pm0.001}$ & - \\
 \midrule
\multirow{3}{*}{\rot{FL}} & FedProx & - & $0.126\pm0.237$ & $0.121\pm0.245$ & $0.000\pm0.000$ & - & $0.939\pm0.002$ & $0.935\pm0.001$ & $\mathit{0.937\pm0.007}$ \\
 & Scaffold & - & $0.728\pm0.010$ & $\mathit{0.776\pm0.016}$ & $\bf 0.800\pm0.017$ & - & $0.942\pm0.001$ & $0.929\pm0.003$ & $0.932\pm0.001$ \\
 & AFL & - & $0.811\pm0.024$ & $0.834\pm0.005$ & $0.828\pm0.005$ & - & $ 0.942\pm0.011$ & $ 0.947\pm0.006$ & $ 0.932\pm0.008$ \\
 \midrule
\multirow{4}{*}{\rot{FDG}} & FedDG & - & $0.561\pm0.184$ & $0.531\pm0.112$ & $0.464\pm0.201$ & - & $0.863\pm0.012$ & $0.856\pm0.003$ & $0.869\pm0.016$ \\
 & FedADG & - & $0.674\pm0.003$ & $0.669\pm0.008$ & $\mathit{0.600\pm0.009}$ & - & $0.936\pm0.001$ & $0.933\pm0.004$ & $0.926\pm0.002$ \\
 & FedSR & - & $0.381\pm0.018$ & $0.363\pm0.041$ & $0.383\pm0.015$ & - & $0.934\pm0.010$ & $0.917\pm0.009$ & $0.925\pm0.003$ \\
 & FedGMA & - & $0.620\pm0.012$ & $0.615\pm0.018$ & $0.487\pm0.008$ & - & $0.901\pm0.006$ & $0.854\pm0.006$ & $0.815\pm0.025$ \\ \bottomrule
\end{tabular}}

\end{table}
\begin{table}[!ht]
\caption{Test accuracy on CivilComments and Py150 datasets with held-out-domain validation where FedAvg-ERM is the simple baseline (B). ``-'' means the method is not applicable in that context. Bold is for best and italics is for second best in each column. FedDG and FedADG are designed for image dataset and thus not applicable for CivilComments and Py150. We report the standard deviation among 3 runs.}\label{tab:text_main_app}
\resizebox{\columnwidth}{!}{%
\begin{tabular}{@{}llllllllll@{}}
\toprule
 &  & \multicolumn{4}{c}{CivilComments ($D=16$)} & \multicolumn{4}{c}{Py150 ($D=5477$)} \\ \cmidrule(l){3-6} \cmidrule(l){7-10}  
 &  & $C=1$ & \multicolumn{3}{c}{$C=100$ (FL)} & $C=1$ & \multicolumn{3}{c}{$C=100$ (FL)} \\ \cmidrule(lr){4-6}  \cmidrule(l){8-10} 
 &  & (centralized) & $\lambda=1$ & $\lambda=0.1$ & $\lambda=0$ & (centralized) & $\lambda=1$ & $\lambda=0.1$ & $\lambda=0$ \\ 
\midrule
\rot{B}& FedAvg-ERM & $0.541\pm0.003$ & $0.359\pm0.028$ & $0.347\pm0.018$ & $0.325\pm0.009$ & $\mathbf{0.683}\pm0.003$ & $\mathbf{0.683}\pm0.000$ & $\mathit{0.650\pm0.000}$ & $\mathit{0.641}\pm0.000$ \\
\midrule
 \multirow{5}{*}{\rot{DG Adapted}} & IRM & $0.641\pm0.001$ & $\mathit{0.587\pm0.018}$ & $0.534\pm0.038$ & - & $\mathit{0.677}\pm0.000$ & $\mathit{0.655\pm0.001}$ & $\mathbf{0.653\pm0.000}$ & $0.627\pm0.001$ \\
 & Fish & $\mathbf{0.671}\pm0.000$ & $0.424\pm0.158$ & $0.340\pm0.173$ & - & $0.663\pm0.000$ & $0.650\pm0.000$ & $0.648\pm0.001$ & $\mathbf{0.642\pm0.001}$ \\
 & MMD & $\mathit{0.652}\pm0.000$ & $\mathbf{0.634\pm0.012}$ & $\mathbf{0.613\pm0.014}$ & - & $0.656\pm0.001$ & $0.627\pm0.000$ & $0.629\pm0.000$ & $0.610\pm0.000$ \\
 & DeepCoral & $0.585\pm0.000$ & $0.515\pm0.061$ & $0.463\pm0.080$ & - & $0.656\pm0.003$ & $0.651\pm0.000$ & $0.650\pm0.001$ & $0.639\pm0.000$ \\
 & GroupDRO & $0.638\pm0.003$ & $0.475\pm0.014$ & $\mathit{0.474\pm0.003}$ & - & $0.513\pm0.001$ & $0.589\pm0.000$ & $0.603\pm0.003$ & $0.607\pm0.002$ \\ 
 \midrule
\multirow{3}{*}{\rot{FL}} & FedProx & - & $0.181\pm0.032$ & $0.173\pm0.033$ & $0.170\pm0.036$ & - & $0.638\pm0.000$ & $0.633\pm0.000$ & $0.612\pm0.000$ \\
 & Scaffold & - & $0.389\pm0.015$ & $0.376\pm0.018$ & $\mathit{0.332\pm0.014}$ & - & $0.642\pm0.000$ & $0.638\pm0.000$ & $0.617\pm0.000$ \\ 
& AFL & - & $0.552\pm0.021$ & $0.474\pm0.014$ & $\mathbf{0.435\pm0.019}$ & - & $0.486\pm0.000$ & $0.485\pm0.003$ & $0.467\pm0.001$ \\
 \midrule
 
\multirow{2}{*}{\rot{FDG}} & FedSR & - & $0.360\pm0.003$ & $0.339\pm0.003$ & $0.319\pm0.003$ & - & $0.533\pm0.002$ & $0.526\pm0.001$ & $0.445\pm0.003$ \\
 & FedGMA & - & $0.209\pm0.028$ & $0.202\pm0.024$ & $0.200\pm0.022$ & - & $0.620\pm0.000$ & $0.613\pm0.000$ & $0.600\pm0.000$ \\ \bottomrule
\end{tabular}}
\end{table}
\begin{table}[htbp]
\caption{Test accuracy on FEMNIST dataset with held-out validation where FedAvg-ERM is the simple baseline (B). ``-'' means the method is not applicable in that context. Bold is for best and italics is for second best in each column.} \label{tab:femnist}
\centering
\begin{tabular}{@{}llllll@{}}
\toprule
 &  & \multicolumn{4}{c}{FEMNIST (Held-Out-Domain)} \\ \cmidrule(l){3-6}
 &  & $C=1$ & \multicolumn{3}{c}{$C=100$ (FL)} \\ \cmidrule(lr){4-6} 
 &  & (centralized) & $\lambda=1$ & $\lambda=0.1$ & $\lambda=0$ \\ 
 \midrule
 \rot{B} & FedAvg-ERM & $\mathbf{0.854\pm0.001}$ & $0.837\pm0.003$ & $0.838\pm0.002$ & $0.834\pm0.001$ \\
 \midrule
 \multirow{6}{*}{\rot{DG Adapted}} & IRM & $0.844\pm0.001$ & $0.832\pm0.007$ & $0.822\pm0.005$ & $0.821\pm0.004$ \\
 & Fish & $\mathit{0.849\pm0.002}$ & $0.833\pm0.004$ & $0.829\pm0.003$ & $0.826\pm0.008$\\
 & Mixup & $0.834\pm0.002$ & $0.828\pm0.009$ & $0.821\pm0.008$ & $0.816\pm0.005$ \\
 & MMD & $0.844\pm0.003$ & $0.843\pm0.006$ & $0.832\pm0.007$ & $0.829\pm0.006$ \\
 & DeepCoral & $0.846\pm0.002$ & $0.840\pm0.005$ & $0.836\pm0.002$ & $0.851\pm0.003$\\
 & GroupDRO & $0.841\pm0.004$ & $0.835\pm0.005$ & $0.814\pm0.002$ & $0.805\pm0.003$ \\
 \midrule
\multirow{2}{*}{\rot{FL}} & FedProx & - & $\mathbf{0.865\pm0.001}$ & $\mathbf{0.864\pm0.001}$ & $\mathbf{0.863\pm0.001}$ \\ 
 & Scaffold & - & $\mathit{0.860\pm0.002}$ & $\mathit{0.859\pm0.002}$ & $\mathit{0.858\pm0.002}$ \\
  & AFL & - & $0.629\pm0.010$ & $0.593\pm0.037$ & $0.584\pm0.020$ \\
 \midrule
\multirow{4}{*}{\rot{FDG}} & FedDG & - & $0.859\pm0.002$ & $0.857\pm0.002$ & $0.856\pm0.002$ \\
 & FedADG & - & $0.817\pm0.026$ & $0.800\pm0.061$ & $0.785\pm0.025$ \\
 & FedSR  & - & $0.832\pm0.015$ & $0.832\pm0.020$ & $0.831\pm0.014$ \\
 & FedGMA & - & $0.849\pm0.010$ & $0.842\pm0.004$ & $0.837\pm0.003$ \\ \bottomrule
\end{tabular}
\end{table}

\section{Gap Table}
We list the gap table in \autoref{tab:gap} for summarizing the current DG algorithms performance gap w.r.t FedAvg-ERM in the FL context, in particular, positive means it outperforms FedAvg-ERM, negative means it is worse than FedAvg-ERM. It can be seen that in the on the simple dataset, the best DG migrated from centralized setting is better than FedAvg-ERM. In the complete heterogeneity case, no centralized DG algorithms can be adapted to it, and FDG methods performs comparably good in this setting. However, they fail in harder datasets. In the hardest setting, currently the Federated methods dealing with data heterogeneity performs the best. It is worth noting that while federated learning methods that address client heterogeneity perform better than other methods, they still fall short of achieving centralized empirical risk minimization (ERM). This highlights the need for future research and development of DG methods in the FL regime.
\begin{table}[!htbp]
\centering
\caption{Gap Table: Current Progress in solving DG in FL context}\label{tab:gap}
\resizebox{\columnwidth}{!}{%
\begin{tabular}{@{}llllllllll@{}}
\toprule
 & & \halfrot{\shortstack[l]{Domain\\Seperation}} & \halfrot{\shortstack[l]{Free of\\Data Leakage}} & \halfrot{PACS} & \halfrot{IWildCam} & \halfrot{CelebA} & \halfrot{Camelyon17} & \halfrot{CivilComments} & \halfrot{Py150} \\ \midrule
\multicolumn{2}{l}{Centralized adopted methods} & \xmark & \cmark & $-0.010$ & $-0.047$ & $+0.265$ & $+0.003$ & $+0.266$ & $+0.003$ \\
\multicolumn{2}{l}{Federated methods} & \cmark & \cmark & $-0.040$ & $+0.010$ & $+0.319$ & $-0.003$ & $+0.266$ & $-0.012$ \\ \midrule
\multirow{4}{*}{\rot{FDG}} & FedDG & \cmark & \xmark & $-0.008$ & $-0.010$ & $+0.014$ & $-0.094$ & - & - \\
& FedADG & \cmark & \cmark & $-0.013$ & $+0.060$ & $+0.152$ & $ -0.017$ & - & - \\
& FedSR & \cmark & \cmark & $-0.407$ & $-0.104$ & $-0.154 $ & $ -0.033 $ & $-0.008$ & $-0.124$ \\
& FedGMA & \cmark & \cmark & $-0.225$ & $-0.094$ & $+0.098 $ & $ -0.096 $ & $-0.145$ & $-0.037$ \\ \bottomrule
\end{tabular}}
\end{table}

\section{Training Time, Communication Rounds and local computation}
In this section, we provide training time per communication in terms of the wall clock training time. Notice that for a fixed dataset, most of algorithms have similar training time comparing to FedAvg-ERM, where FedDG and FedADG are significantly more expensive.
\begin{table}[htbp]
\caption{Wall-clock Training time per communication (unit: s). There might be variance due to the machine's status and resource availability.} \label{tab:wallclock_time.}
\resizebox{\columnwidth}{!}{%
\begin{tabular}{@{}lllllllll@{}}
\toprule 
 &  & PACS & IWildCam  & CelebA & Camelyon17 & CivilComments & Py150 & FEMNIST \\
 &  & $C=100$ & $C=243$  & $C=100$&$C=100$& $C=100$ & $C=100$ & $C=100$ \\
\midrule
 \rot{B} & FedAvg-ERM & $143$ & $6301$  & $298$ & $845$ & $3958$ & $6566$ & $262$ \\
 \midrule 
 \multirow{6}{*}{\rot{DG Adapted}} & IRM & $147$ & $6454$ & $309$ & $1163$ & $4085$ & $7089$ & $297$ \\
 & Fish & $148$ & $7072$  & $312$ & $1003$ & $5483$ & $7770$ & $324$ \\
 & Mixup & $144$ & $6294$  & $302$ & $933$ & - & - & $264$ \\
 & MMD & $144$ & $6663$  & $312$ & $963$ & $4024$ & $7603$ & $287$ \\
 & DeepCoral & $144$ & $6597$  & $313$ & $879$ & $3901$ & $7212$ & $287$ \\
 & GroupDRO & $145$ & $9311$  & $310$& $750$ & $4690$ & $8121$ & $307$ \\
 \midrule
\multirow{2}{*}{\rot{FL}} & FedProx & $169$ & $6921$ & $1219$ & $4310$ & $4502$ & $6513$ & $288$ \\
 & Scaffold & $167$ & $6876$  & $1344$ & $4623$ & $4421$ & $6389$ & $281$ \\ \midrule
 
\multirow{4}{*}{\rot{FDG}} & FedDG & $352$ & $32172$  & $9232$ & $15784$ & - & - & $989$ \\
 & FedADG & $181$ & $11094$  & $6502$ & $9945$ & - & - & $907$ \\
 & FedSR & $151$ & $7136$  & $2267$ & $6567$ & $4403$ & $8020$ & $280$ \\
 & FedGMA & $143$ & $6795$  & $558$ & $2036$ & $4525$ & $6545$ & $261$ \\\bottomrule
\end{tabular}}
\end{table}
\end{document}